\newtheorem{lemma}{Lemma}
\newtheorem{theorem}{Theorem}
\newcommand{\mbf}{\mathbf}
\newcommand{\sbf}{\boldsymbol}
\newcounter{counter}
\title{Gradient Descent Temporal Difference-difference Learning}
\author{
  Rong J.B. Zhu \\
  Institute of Science and Technology for Brain-inspired Intelligence\\
  Fudan University, Shanghai 200433, People’s Republic of China\\
  \texttt{rongzhu@fudan.edu.cn} \\
  \And
  James M. Murray \\
  Institute of Neuroscience\\
 University of Oregon, Eugene, OR 97403, USA\\
 \texttt{jmurray9@uoregon.edu}\\
}
\begin{document}

\maketitle


\begin{abstract}
Off-policy algorithms, in which a behavior policy differs from the target policy and is used to gain experience for learning, have proven to be of great practical value in reinforcement learning.
However, even for simple convex problems such as linear value function approximation, these algorithms are not guaranteed to be stable. To address this, alternative algorithms that are provably convergent in such cases have been introduced, the most well known being gradient descent temporal difference (GTD) learning. This algorithm and others like it, however, tend to converge much more slowly than conventional temporal difference learning.
In this paper we propose \textit{gradient descent temporal difference-difference (Gradient-DD) learning} in order to improve GTD2, a GTD algorithm \citep{SMP09}, by introducing second-order differences in successive parameter updates.
We investigate this algorithm in the framework of linear value function approximation, 
theoretically proving its convergence by applying the theory of stochastic approximation.
Studying the model empirically on the random walk task, the Boyan-chain task, and the Baird's off-policy counterexample, we find substantial improvement over GTD2 and, in several cases, better performance even than conventional TD learning.
\end{abstract}


\section{Introduction}


Off-policy algorithms for value function learning enable an agent to use a behavior policy that differs from the target policy in order to gain experience for learning.
However, because off-policy methods learn a value function for a target policy given data due to a different behavior policy, they are slower to converge than on-policy methods and may even diverge  when applied to problems involving function approximation \citep{Baird:95,Sutton:2018}.

Two general approaches have been investigated to address the challenge of developing stable and effective off-policy temporal-difference algorithms.
One approach is to use importance sampling methods to warp the update distribution so that the expected value is corrected 
\citep{Precup00,Mahmood14}. This approach is useful for the convergence guarantee, 
but it does not address stability issues.
The second main approach to addressing the challenge of off-policy learning is to develop true gradient descent-based methods that are guaranteed to be stable regardless of the update distribution.
\citep{SSM09,SMP09} proposed the first off-policy gradient-descent-based temporal difference algorithms (GTD and GTD2, respectively). These algorithms are guaranteed to be stable, with computational complexity scaling linearly with the size of the function approximator. Empirically, however, their convergence is much slower than conventional temporal difference (TD) learning, limiting their practical utility \citep{Ghiassian:20,White:16}.
Building on this work, extensions to the GTD family of algorithms 
(see \citep{Gh:18} for a review) 
have allowed for incorporating eligibility traces \citep{Maei:10,GS:14}, non-linear function approximation such as with a neural network \citep{Maei:11}, and reformulation of the optimization as a saddle point problem \citep{Liu:15,Du:17}.
However, due to their slow convergence, none of these stable off-policy methods are commonly used in practice.

In this work, we introduce a new gradient descent algorithm for temporal difference learning with linear value function approximation. This algorithm, which we call \textit{gradient descent temporal difference-difference} (Gradient-DD) learning, is an acceleration technique that employs second-order differences in successive parameter updates. 
The basic idea of Gradient-DD is to modify the error objective function by additionally considering the prediction error obtained in the last time step, then to derive a gradient-descent algorithm based on this modified objective function. 
In addition to exploiting the Bellman equation to get the solution, this modified error objective function avoids drastic changes in the value function estimate by encouraging local search around the current estimate.
Algorithmically, the Gradient-DD approach only adds an additional term to the update rule of the GTD2 method, and the extra computational cost is negligible.
We prove its convergence by applying the theory of stochastic approximation.
This result is supported by numerical experiments, which also show that Gradient-DD obtains better convergence in many cases than conventional TD learning.

\subsection{Related Work}
In related approaches to ours, some previous studies have attempted to improve Gradient-TD algorithms by adding regularization terms to the objective function.
These approaches have used  have used $l_1$ regularization on weights to learn sparse representations of value functions \cite{ Liu:12}, or $l_2$ regularization on weights \cite{Ghiassian:20}. 
Our work is different from these approaches in two ways. 
First, whereas these previous studies investigated a variant of TD learning with gradient corrections, we take the GTD2 algorithm as our starting point.
Second, unlike these previous approaches, our approach modifies the error objective function by using a distance constraint rather than a penalty on weights.
The distance constraint works by restricting the search to some region around the evaluation obtained in the most recent time step.
With this modification, our method provides a learning rule that contains  second-order differences in successive parameter updates.

Our approach is similar to trust region policy optimization \citep{Schulman:15} or relative entropy policy search \citep{Peters:10}, which penalize large changes being learned in policy learning. In these methods, constrained optimization is used to update the policy by considering the constraint on some measure between the new policy and the old policy. Here, however, our aim is to find the optimal value function, and the regularization term uses the previous value function estimate to avoid drastic changes in the updating process. 

Our approach bears similarity to the natural gradient approach widely used in reinforcement learning  \citep{Amari:98,Bh:09,Degris:12,DT:14,Thomas:16}, which also features a constrained optimization form.
However, Gradient-DD is distinct from the natural gradient. 
The essential difference is that, unlike the natural gradient, Gradient-DD is a trust region method, which defines the trust region according to the difference between the current value and the value obtained from the previous step.
From the computational cost viewpoint, unlike natural TD \citep{DT:14}, which needs to update an estimate of the metric tensor, 
the computational cost of Gradient-DD is essentially the same as that of GTD2.

\section{Gradient descent method for off-policy temporal difference learning}
\label{sec:off}

\subsection{Problem definition and background}
In this section, we formalize the problem of learning the value function for a given policy under the Markov decision process (MDP) framework.
In this framework, the agent interacts with the environment over a sequence of discrete time steps, $t=1,2,\ldots$. At each time step the agent observes a state $s_t\in \mathcal{S}$ 
and selects an action $a_t\in \mathcal{A}$. In response, the environment emits a reward $r_{t}\in \mathbb{R}$ and transitions the agent to its next state $s_{t+1}\in \mathcal{S}$. The state and action sets are finite. State transitions are stochastic and dependent on the immediately preceding state and action. Rewards are stochastic and dependent on the preceding state and action, as well as on the next state. The process generating the agent's actions is termed the behavior policy. In off-policy learning, this behavior policy is in general different from the target policy $\pi: \mathcal{S}\rightarrow \mathcal{A}$.  
The objective is to learn an approximation to the state-value function under the target policy in a particular environment: 
\begin{equation}
V(s)=\text{E}_{\pi}\left[\sum\nolimits_{t=1}^{\infty}\gamma^{t-1}r_t|s_1=s\right],
\end{equation}
where $\gamma\in[0,1)$ is the discount rate. 

In problems for which the state space is large, it is practical to approximate the value function. In this paper we consider linear function approximation, where states are mapped to feature vectors with fewer components than the number of states. Specifically, for each state $s\in \mathcal{S}$ there is a corresponding feature vector $\mbf{x}(s)\in \mathbb{R}^p$, with $p\leq |\mathcal{S}|$, such that the approximate value function is given by 
\begin{equation}
\label{eq:v_approx}
V_\mbf{w}(s):=\mbf{w}^{\top}\mbf{x}(s).
\end{equation} 
The goal is then to learn the parameters $\mbf{w}$ such that $V_\mbf{w}(s) \approx V(s)$.

\subsection{Gradient temporal difference learning}
A major breakthrough for the study of the convergence properties of MDP systems came with the introduction of the GTD and GTD2 learning algorithms \citep{SSM09,SMP09}. We begin by briefly recapitulating the GTD algorithms, which we will then extend in the following sections.
To begin, we introduce the Bellman operator $\mathbf{B}$ such that the true value function $\mbf{V} \in \mathbb{R}^{|\mathcal{S}|}$ satisfies the Bellman equation: 
$$\mbf{V}=\mbf{R}+\gamma \mbf{P}\mbf{V}=:\mbf{B}\mbf{V},$$
where $\mbf{R}$ is the reward vector with components $\text{E}(r_{n}|s_n=s)$, and $\mbf{P}$ is a matrix of the state transition probabilities under the behavior policy.
In temporal difference methods, an appropriate objective function should minimize the difference between the approximate value function and the solution to the Bellman equation.

Having defined the Bellman operator, we next introduce the projection operator $\mbf{\Pi}$, which takes any value function $\mbf{V}$ and projects it to the nearest value function within the space of approximate value functions of the form Eqn.~(\ref{eq:v_approx}). Letting $\mbf{X}$ be the matrix whose rows are $\mbf{x}(s)$, the approximate value function can be expressed as $\mbf{V}_{\mbf{w}}=\mbf{X}\mbf{w}$. 
The projection operator is then given by
$$\mbf{\Pi}=\mbf{X}(\mbf{X}^{\top}\mbf{D}\mbf{X})^{-1}\mbf{X}^{\top}\mbf{D},$$
where the matrix $\mbf{D}$ is diagonal, with each diagonal element $d_s$ corresponding to the probability of visiting state $s$.

The natural measure of how closely the approximation $\mbf{V}_{\mbf{w}}$ satisfies the Bellman equation is the mean-squared Bellman error:
\begin{equation}\label{MSBE}
\text{MSBE}(\mbf{w})=\|\mbf{V}_{\mbf{w}}-\mbf{B}\mbf{V}_{\mbf{w}}\|_{\mbf{D}}^2,
\end{equation}
where the norm is weighted by $\mbf{D}$, such that $\| \mbf{V} \|^2_{\mbf{D}} = \mbf{V}^\top \mbf{D}\mbf{V}$.
However, because the Bellman operator follows the underlying state dynamics of the Markov chain, irrespective of the structure of the linear function approximator, $\mbf{B}\mbf{V}_{\mbf{w}}$ will typically not be representable as $\mbf{V}_{\mbf{w}}$ for any $\mbf{w}$.
An alternative objective function, therefore, is the mean squared \textit{projected} Bellman error (MSPBE), which we define as 
\begin{align}\label{Q}
J(\mbf{w})
	=\|\mbf{V}_{\mbf{w}}-\mbf{\Pi} \mbf{B}\mbf{V}_{\mbf{w}}\|_{\mbf{D}}^2.
\end{align}
Following \citep{SMP09}, our objective is to minimize this error measure.
As usual in stochastic gradient descent, the weights at each time step are then updated by $\Delta\mbf{w}=-\alpha\nabla J(\mbf{w})$, 
where $\alpha>0$,
and 
\begin{align}\label{ng}
-\frac{1}{2}\nabla J(\mbf{w})
& = -\text{E}[(\gamma\mbf{x}_{n+1}-\mbf{x}_n)\mbf{x}_n^{\top}][\text{E}(\mbf{x}_n\mbf{x}_n^{\top})]^{-1}\text{E}(\delta_n\mbf{x}_n).
\end{align}
For notational simplicity, we have denoted the feature vector associated with $s_n$ as $\mbf{x}_n=\mbf{x}({s_n})$.
We have also introduced the temporal difference error $\delta_n=r_n+(\gamma\mbf{x}_{n+1}-\mbf{x}_n)^{\top}\mbf{w}_n$. 
Let $\boldsymbol{\eta}_n$ denote the estimate of $[\text{E}(\mbf{x}_n\mbf{x}_n^{\top})]^{-1}\text{E}(\delta_n\mbf{x}_n)$ at the time step $n$. 
Because the factors in Eqn.~(\ref{ng}) can be directly sampled, the resulting updates in each step are
\begin{align}\label{SQ-GTD2}
\delta_n=&r_n+(\gamma\mbf{x}_{n+1}-\mbf{x}_n)^{\top}\mbf{w}_n\notag\\
\boldsymbol{\eta}_{n+1}=&\boldsymbol{\eta}_n+\beta_n(\delta_n-\mbf{x}_n^{\top}\boldsymbol{\eta}_n)\mbf{x}_n\notag\\
\mbf{w}_{n+1}=&\mbf{w}_n-\alpha_n(\gamma\mbf{x}_{n+1}-\mbf{x}_n)(\mbf{x}_n^{\top}\boldsymbol{\eta}_n).
\end{align}
These updates define the GTD2 learning algorithm, 
which we will build upon in the following section.

\section{Gradient descent temporal difference-difference learning}
\label{sec:Gradient-DD}

In this section we modify the objective function by additionally considering the difference between $\mbf{V}_\mbf{w}$ and $\mbf{V}_{\mbf{w}_{n-1}}$, which denotes the value function estimate at step $n-1$ of the optimization.
We propose a new objective $J_{\text{GDD}}(\mbf{w}|\mbf{w}_{n-1})$, where the notation ``$\mbf{w}|\mbf{w}_{n-1}$" in the parentheses means that the objective is defined given $\mbf{V}_{\mbf{w}_{n-1}}$ of the previous time step $n-1$. 
Specifically, 
we modify Eqn.~(\ref{Q}) as follows: 
\begin{equation}\label{Q-TDD}
J_{\text{GDD}}(\mbf{w}|\mbf{w}_{n-1})=J(\mbf{w})+\kappa\|\mbf{V}_{\mbf{w}}-\mbf{V}_{\mbf{w}_{n-1}}\|_{\mbf{D}}^2,
\end{equation}
where $\kappa\geq 0$ is a parameter of the regularization.
We show in Section \ref{sec:KKT} of the appendix that 
minimizing Eqn.~(\ref{Q-TDD}) is equivalent to the following optimization 
\begin{equation}\label{Q-TDD-2}
\arg\min\limits_{\mbf{w}}J(\mbf{w}) \text{ s.t. } \|\mbf{V}_{\mbf{w}}-\mbf{V}_{\mbf{w}_{n-1}}\|_{\mbf{D}}^2\leq \mu
\end{equation}
where $\mu>0$ is a parameter which becomes large when $\kappa$ is small, so that the MSPBE objective is recovered as $\mu\to \infty$, equivalent to $\kappa \to 0$ in Eqn.~(\ref{Q-TDD}).

\begin{figure}[ht]
\centering
\begin{minipage}[c]{0.45\textwidth}
\centering
\includegraphics[keepaspectratio,width=\linewidth]{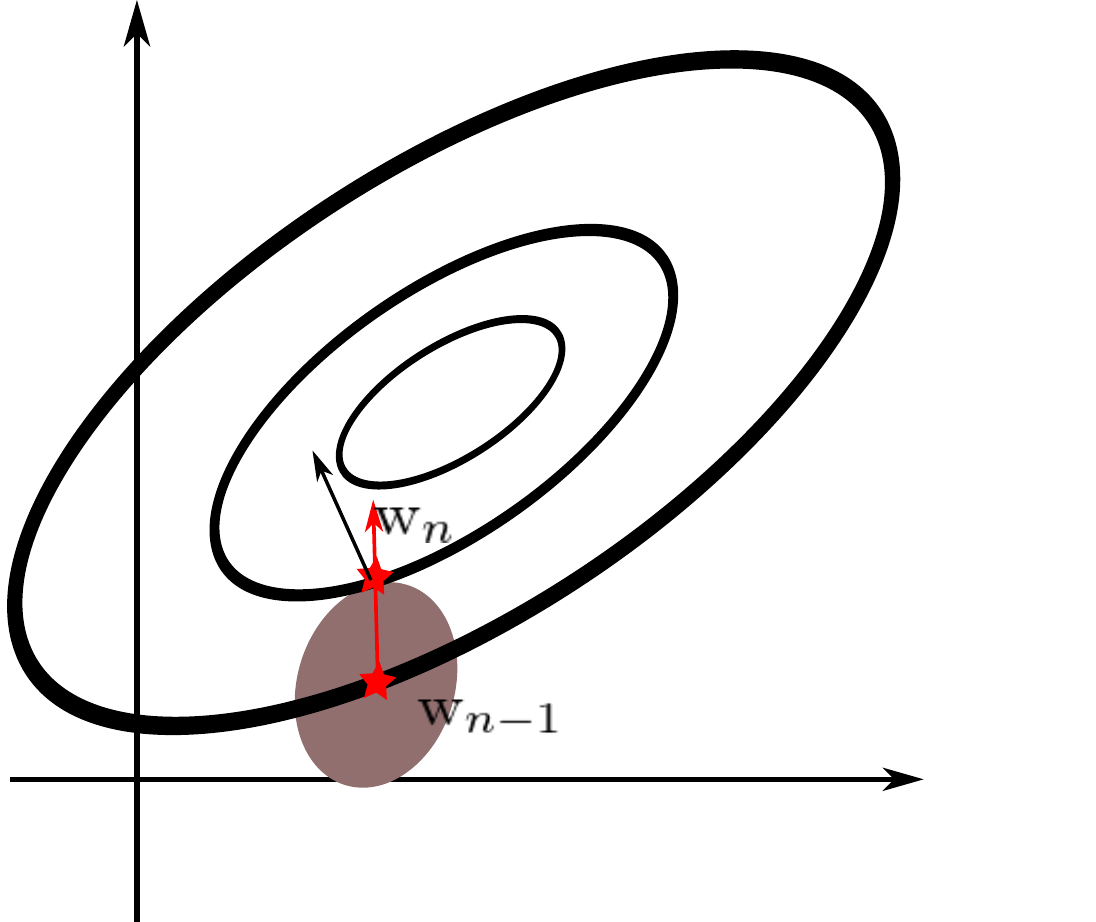}    
\end{minipage}\hfill
\begin{minipage}[c]{0.5\textwidth}
\centering
\caption{Schematic diagram of Gradient-DD learning with $\mbf{w}\in \mathbb{R}^2$. Rather than updating $\mbf{w}$ directly along the gradient of the MSPBE (black arrow), the update rule selects $\mbf{w}_n$ (red arrow) that minimizes the MSPBE while satisfying the constraint 
$\|\mbf{V}_{\mbf{w}}-\mbf{V}_{\mbf{w}_{n-1}}\|_{\mbf{D}}^2\leq \mu$ (shaded ellipse). 
}
\label{fig:illustration}
\end{minipage}
\end{figure}

Rather than simply minimizing the optimal prediction from the projected Bellman equation, the agent makes use of the most recent update to look for the solution, choosing a $\mbf{w}$ that minimizes the MSPBE while following the constraint that the estimated value function should not change too greatly, as illustrated in Fig.~\ref{fig:illustration}.
In effect, the regularization term encourages searching around the estimate at previous time step, especially when the state space is large.

Eqn.~(\ref{Q-TDD-2}) shows that the regularized objective is a \textit{trust region} approach, which seeks a direction that attains the best improvement possible subject to the distance constraint. 
The trust region is defined by the value distance rather than the weight distance, meaning that Gradient-DD also makes use of the natural gradient of the objective around $\mbf{w}_{n-1}$ rather than around $\mathbf{w}_n$ (see  Section \ref{sec:nat} of the appendix for details).
In this sense, our approach can be explained as a trust region method that makes use of  natural gradient information to prevent the estimated value function from changing too drastically.

For comparison with related approaches using natural gradients, in Fig.~\ref{fig:walk-natural} of the appendix we compare the empirical performance of our algorithm with natural GTD2 and natural TDC \cite{DT:14} using the random walk task introduced below in Section \ref{sec:task}.

With these considerations in mind, the negative gradient of $J_{\text{GDD}}(\mbf{w}|\mbf{w}_{n-1})$ is 
\begin{align}\label{ng-TDD}
&-\frac{1}{2}\nabla J_{\text{GDD}}(\mbf{w}|\mbf{w}_{n-1})\notag\\
=&-\text{E}[(\gamma\mbf{x}_{n+1}-\mbf{x}_n)\mbf{x}_n^{\top}][\text{E}(\mbf{x}_n\mbf{x}_n^{\top})]^{-1}\text{E}(\delta_n\mbf{x}_n)-\kappa\text{E}[(\mbf{x}_n^{\top}\mbf{w}_n-\mbf{x}_n^{\top}\mbf{w}_{n-1})\mbf{x}_n].
\end{align}
Because the terms in Eqn.~(\ref{ng-TDD}) can be directly sampled, the stochastic gradient descent updates are given by
\begin{align}\label{SQ-GDD}
\delta_n=&r_n+(\gamma\mbf{x}_{n+1}-\mbf{x}_n)^{\top}\mbf{w}_n\notag\\
\boldsymbol{\eta}_{n+1}=&\boldsymbol{\eta}_n+\alpha_n(\delta_n-\mbf{x}_n^{\top}\boldsymbol{\eta}_n)\mbf{x}_n\notag\\
\mbf{w}_{n+1}=&\mbf{w}_n-\kappa\alpha_n(\mbf{x}_n^{\top}\mbf{w}_n-\mbf{x}_n^{\top}\mbf{w}_{n-1})\mbf{x}_n-\alpha_n(\gamma\mbf{x}_{n+1}-\mbf{x}_n)(\mbf{x}_n^{\top}\boldsymbol{\eta}_n).
\end{align}
These update equations define the Gradient-DD method, in which the GTD2 update equations (\ref{SQ-GTD2}) are generalized by including a second-order update term in the third update equation, where this term originates from the squared bias term in the objective (\ref{Q-TDD}).
Since Gradient-DD is not sensitive to the step size of updating $\boldsymbol{\eta}$ (see Fig.~\ref{fig:walk-beta} in the appendix), the updates of Gradient-DD only have a single shared step size $\alpha_n$ rather than two step sizes $\alpha_n, \beta_n$ as GTD2 and TDC used. 
It is worth noting that 
the computational cost of our algorithm is essentially the same as that of GTD2. 
In the following sections, we shall analytically and numerically investigate the convergence and performance of Gradient-DD learning.

\section{Convergence Analysis}
\label{sec:theoretical}

In this section we establish the convergence of Gradient-DD learning. 
Denote $\mbf{G}_n=\left[\begin{array}{cc}\mbf{x}_n\mbf{x}_n^{\top} & \mbf{x}_n(\mbf{x}_n-\gamma\mbf{x}_{n+1})^{\top} \\  
-(\mbf{x}_n-\gamma\mbf{x}_{n+1})\mbf{x}_n^{\top} & \mbf{0}\end{array}\right],
\text{ and } \mbf{H}_n=\left[\begin{array}{cc} \mbf{0} & \mbf{0} \\  
\mbf{0} & \mbf{x}_n\mbf{x}_n^{\top}\end{array}\right]$.   
We rewrite the update rules in Eqn.~(\ref{SQ-GDD}) as a single iteration in a combined parameter vector with $2n$ components, 
$\sbf{\rho}_n=(\boldsymbol{\eta}_n^{\top},\mbf{w}_n^{\top})^{\top}$, and a new reward-related vector with $2n$ components, 
$\mbf{g}_{n+1}=(r_n\mbf{x}_n^{\top}, \mbf{0}^{\top})^{\top}$, as follows:
\begin{align}\label{update-2n}
\sbf{\rho}_{n+1}=&\sbf{\rho}_n-\kappa\alpha_n \mbf{H}_n(\sbf{\rho}_n-\sbf{\rho}_{n-1})+\alpha_n(\mbf{G}_n\sbf{\rho}_n+\mbf{g}_{n+1}),
\end{align}

\begin{theorem}\label{them:convergence-GDD}
Consider the update rules (\ref{update-2n}) with step-size sequences $\alpha_n$. 
Let the TD fixed point be $\mbf{w}^*$, such that $\mbf{V}_{\mbf{w}^*}=\mbf{\Pi} \mbf{B}\mbf{V}_{\mbf{w}^*}$.
Suppose that 
(A0) $\alpha_n\in(0,1)$, $\sum\nolimits_{n=1}^{\infty}\alpha_n=\infty$, $\sum\nolimits_{n=1}^{\infty}\alpha_n^2<\infty$,   
(A1) $(\mbf{x}_n, r_n, \mbf{x}_{n+1})$ is an i.i.d.~sequence with uniformly bounded second moments, 
(A2) $\text{E}[(\mbf{x}_n-\gamma\mbf{x}_{n+1})\mbf{x}_n^{\top}]$ and $\text{E}(\mbf{x}_n\mbf{x}_n^{\top})$ are non-singular, 
(A3) $\sup_n\|\sbf{\rho}_{n+1}-\sbf{\rho}_{n}\|$ is bounded in probability,
(A4) $\kappa\in[0,\infty)$. 
Then as $n\rightarrow \infty$, $\mbf{w}_n\rightarrow \mbf{w}^*$  with probability 1.
\end{theorem}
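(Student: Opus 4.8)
The plan is to prove Theorem~\ref{them:convergence-GDD} by the ordinary differential equation (ODE) method of stochastic approximation. First I would write the coupled recursion (\ref{update-2n}) in the canonical form
\[
\sbf{\rho}_{n+1}=\sbf{\rho}_n+\alpha_n\big(\mbf{h}(\sbf{\rho}_n)+\mbf{M}_{n+1}+\sbf{\zeta}_n\big),
\]
with mean field $\mbf{h}(\sbf{\rho})=\bar{\mbf{G}}\sbf{\rho}+\bar{\mbf{g}}$, where $\bar{\mbf{G}}=\text{E}[\mbf{G}_n]$ and $\bar{\mbf{g}}=\text{E}[\mbf{g}_{n+1}]$; martingale-difference noise $\mbf{M}_{n+1}=(\mbf{G}_n-\bar{\mbf{G}})\sbf{\rho}_n+(\mbf{g}_{n+1}-\bar{\mbf{g}})$, which is a martingale difference with respect to the natural filtration $\mathcal{F}_n$ by the i.i.d.\ assumption (A1); and the extra second-order term $\sbf{\zeta}_n=-\kappa\mbf{H}_n(\sbf{\rho}_n-\sbf{\rho}_{n-1})$. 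Using the non-singularity in (A2) I would identify the unique equilibrium of $\dot{\sbf{\rho}}=\mbf{h}(\sbf{\rho})$ as $\sbf{\rho}^{*}=(\mbf{0}^{\top},(\mbf{w}^{*})^{\top})^{\top}$: the $\mbf{w}$-block of $\mbf{h}(\sbf{\rho}^{*})=\mbf{0}$ forces $\boldsymbol{\eta}^{*}=\mbf{0}$, after which the $\boldsymbol{\eta}$-block reduces to the projected Bellman equation $\mbf{V}_{\mbf{w}^{*}}=\mbf{\Pi}\mbf{B}\mbf{V}_{\mbf{w}^{*}}$.

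The core structural step is to verify that the ODE $\dot{\sbf{\rho}}=\mbf{h}(\sbf{\rho})$ is globally asymptotically (in fact exponentially) stable, i.e.\ that $\bar{\mbf{G}}$ is Hurwitz. Since the Gradient-DD mean field coincides with that of GTD2, this is precisely the spectral fact underlying the convergence of GTD2, and it follows from (A2) by the argument of \citep{SMP09}: writing $\mbf{v}=(\mbf{v}_1,\mbf{v}_2)$, the antisymmetric off-diagonal blocks contribute only an imaginary part to $\mbf{v}^{*}\bar{\mbf{G}}\mbf{v}$, the positive-definite block $\text{E}[\mbf{x}_n\mbf{x}_n^{\top}]$ controls the real part, and non-singularity of $\text{E}[(\mbf{x}_n-\gamma\mbf{x}_{n+1})\mbf{x}_n^{\top}]$ rules out eigenvalues on the imaginary axis. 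Given this, together with the facts that $\mbf{h}$ is linear (hence globally Lipschitz), that (A0) are the Robbins--Monro step-size conditions, and that (A1) yields the conditional-variance bound $\text{E}[\|\mbf{M}_{n+1}\|^{2}\mid\mathcal{F}_n]\le K(1+\|\sbf{\rho}_n\|^{2})$, the unperturbed iteration would converge to $\sbf{\rho}^{*}$ by the standard ODE method; note that Gradient-DD uses a single step size, so no two-timescale argument is needed.

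What remains, and what I expect to be the main obstacle, is controlling the additional term $\sbf{\zeta}_n$. The key is that substituting (\ref{update-2n}) one step back gives $\sbf{\rho}_n-\sbf{\rho}_{n-1}=\alpha_{n-1}\big(\mbf{h}(\sbf{\rho}_{n-1})+\mbf{M}_n+\sbf{\zeta}_{n-1}\big)$, so the increment is itself of order $\alpha_{n-1}$; combined with (A1) and the boundedness in probability of $\sup_n\|\sbf{\rho}_{n+1}-\sbf{\rho}_n\|$ from (A3), this gives $\|\sbf{\zeta}_n\|\le\kappa\alpha_{n-1}\|\mbf{x}_n\|^{2}Z_n$ with $\{Z_n\}$ tight, so that $\sbf{\zeta}_n\to\mbf{0}$ in probability and $\sum_n\alpha_n\|\sbf{\zeta}_n\|\le\kappa\sum_n\alpha_n\alpha_{n-1}\|\mbf{x}_n\|^{2}Z_n<\infty$ almost surely, using $\sum_n\alpha_n\alpha_{n-1}\le\tfrac12\sum_n(\alpha_n^{2}+\alpha_{n-1}^{2})<\infty$ from (A0) and the uniformly bounded expectation of the remaining factors. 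Hence $\sbf{\zeta}_n$ is an asymptotically negligible perturbation, and the perturbed version of the ODE method (e.g.\ the stochastic-approximation results of Kushner and Yin allowing for vanishing additive perturbations) yields $\sbf{\rho}_n\to\sbf{\rho}^{*}$, and therefore $\mbf{w}_n\to\mbf{w}^{*}$, with probability one. The delicate points will be (i) obtaining almost-sure boundedness of $\{\sbf{\rho}_n\}$ so that the noise and perturbation bounds hold uniformly --- either from (A3) together with the Hurwitz drift, or by projecting the iterate onto a large compact set --- and (ii) making the negligibility of $\sbf{\zeta}_n$ fully rigorous given that $\|\mbf{x}_n\|$ is only assumed to have a second moment; a clean alternative that sidesteps both is to recast (\ref{update-2n}) as a genuine first-order recursion in the augmented variable $(\sbf{\rho}_n^{\top},\sbf{\rho}_{n-1}^{\top})^{\top}$ and apply the ODE method directly to the resulting (degenerate but still stable) system.
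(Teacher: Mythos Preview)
Your high-level strategy---write the recursion in canonical SA form, identify the GTD2 mean field and its Hurwitz property, and treat the second-order term $\sbf{\zeta}_n=-\kappa\mbf{H}_n(\sbf{\rho}_n-\sbf{\rho}_{n-1})$ as an asymptotically negligible additive perturbation---matches the paper's, but the mechanics differ in an interesting way. The paper does \emph{not} use a one-step back-substitution together with $\sum_n\alpha_n\alpha_{n-1}<\infty$ as you do. Instead it introduces $\mbf{u}_n=\sbf{\rho}_n-\sbf{\rho}_{n-1}$ as a second recursion running in parallel with $\sbf{\rho}_n$, and proceeds in three explicit Borkar--Meyn steps: first it uses (A3) only to bound the extra noise contribution and invokes the Borkar--Meyn stability theorem to get $\sup_n\|\sbf{\rho}_n\|<\infty$; then, with $\sbf{\rho}_n$ bounded, it \emph{fully unrolls} the $\mbf{u}_n$ recursion into the telescoping bound $\|\mbf{u}_{n+1}\|\le c\big(\alpha_n+\alpha_n\alpha_{n-1}+\cdots+\prod_{k\le n}\alpha_k\big)+\prod_{k\le n}\alpha_k\|\mbf{u}_0\|$ and proves a dedicated elementary lemma that $\epsilon_n:=\alpha_n+\alpha_n\alpha_{n-1}+\cdots\to 0$ whenever $\alpha_n\to 0$; finally it applies Borkar--Meyn convergence. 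Your route is cleaner and closer to off-the-shelf perturbed-SA theory, while the paper's is self-contained and avoids citing a Kushner--Yin-type perturbation result at the cost of the ad hoc lemma. One caution worth noting: the paper silently strengthens (A1) to bounded features (it writes ``without loss of generality $\|\mbf{x}_n\|\le 1/\kappa$'') precisely to make the unrolling bound work, which is the same moment obstruction you flag in your point~(ii); so your concern there is real, and neither your summability argument nor your augmented-variable alternative will close that gap under second moments alone.
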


\begin{proofsketch}
Due to the second-order difference term in Eqn.~(\ref{update-2n}), 
the analysis framework in \citep{BM:00} does not directly apply to the Gradient-DD algorithm when (A0) holdes, i.e., step size is tapered. Likewise, the two-timescale convergence analysis \citep{Bh:09} is also not directly applicable. 
Defining $\mbf{u}_{n+1}=\sbf{\rho}_{n+1}-\sbf{\rho}_n$, 
we rewrite the iterative process in  Eqn.~(\ref{update-2n}) into two parallel processes which are given by
\begin{align}
\sbf{\rho}_{n+1}&=\sbf{\rho}_{n}-\kappa\alpha_n\mbf{H}_n\mbf{u}_{n}+\alpha_n(\mbf{G}_n\sbf{\rho}_{n}+\mbf{g}_{n+1}),\label{GDD-TwoTimeV1M}\\
\mbf{u}_{n+1} & =-\kappa\alpha_n\mbf{H}_n\mbf{u}_{n}+\alpha_n(\mbf{G}_n\sbf{\rho}_{n}+\mbf{g}_{n+1}).\label{GDD-TwoTimeV2M}
\end{align}
We analyze the parallel processes Eqns.~(\ref{GDD-TwoTimeV1M}) \& Eqn.~(\ref{GDD-TwoTimeV2M}) instead of directly analyzing Eqn.~(\ref{update-2n}). 
Our proofs have three steps. First we show $\sup_n \|\sbf{\rho}_n\|$ is bounded by applying the stability of the stochastic approximation \citep{BM:00} into the recursion Eqn.~(\ref{GDD-TwoTimeV1M}). Second, based on this result, we shall show that $\mbf{u}_n$ goes to 0 in probability by analyzing the recursion Eqn.~(\ref{GDD-TwoTimeV2M}). At last, along with the result that $\mbf{u}_n$ goes to 0 in probability, by applying the convergence of the stochastic approximation \citep{BM:00} into the recursion Eqn.~(\ref{GDD-TwoTimeV1M}), we show that $\sbf{\sbf{\rho}}_n$ goes to the TD fixed point which is given by the solution of $\mbf{G}\sbf{\rho}+\mbf{g}=0$. 
The details are provided in Section \ref{sec:convergence-proof} of the Appendix. 
\end{proofsketch}

Theorem \ref{them:convergence-GDD} shows that Gradient-DD maintains convergence as GTD2 under some mild conditions. 
The assumptions (A0), (A1), and (A2) are standard conditions in the convergence analysis of Gradient TD learning algorithms \citep{SSM09,SMP09,Maei:11}. The assumption (A3) is weak since it means only that the incremental update in each step is bounded in probability.  
The assumption (A4) requires that $\kappa$ is a constant, meaning $\kappa=O(1)$.
Given this assumption, the contribution of the term $\kappa\mbf{H}_n\mbf{u}_{n}$ is controlled by $\alpha_n$ as $n\rightarrow \infty$.

\section{Empirical Study}
\label{sec:task}

In this section, we assess the practical utility of the Gradient-DD method in numerical experiments. 
To validate performance of Gradient-DD learning, we compare Gradient-DD learning with GTD2 learning, TDC learning (TD with gradient correction \citep{SMP09}), TDRC learning (TDC with regularized correction \citep{Ghiassian:20}) and TD learning  in both tabular representation and linear representation. 
We conducted three tasks: a simple random walk task, the Boyan-chain task, and Baird's off-policy counterexample. 
In each task, we evaluate the performance of a learning algorithm by empirical root mean-squared (RMS) error: 
$\sqrt{\sum\nolimits_{s\in\mathcal{S}}d_s(V_{\mbf{w}_n}(s)-V(s))^2}$. 
The reason we choose the empirical RMS error rather than root projected mean-squared error or other measures as \cite{Gh:18,Ghiassian:20} used is because it is a direct measure of concern in practical performance. 
We set the $\kappa=1$. We also investigate the sensitivity to $\kappa$ in the Appendix, where we show that $\kappa=1$ is a good and natural choice in our empirical studies, although tuning $\kappa$ may be necessary for some tasks in practice. 

\subsection{Random walk task}
As a first test of Gradient-DD learning, we conducted a simple random walk task \citep{Sutton:2018}. The random walk task has a linear arrangement of $m$ states plus an absorbing terminal state at each end. Thus there are $m+2$ sequential states, $S_0, S_1, \cdots, S_m, S_{m+1}$, where $m=$ 10, 20, or 40. Every walk begins in the center state. At each step, the walk moves to a neighboring state, either to the right or to the left with equal probability. 
If either edge state ($S_0$ or $S_{m+1}$) is entered, the walk terminates. A walk's outcome is defined to be $r=0$ at $S_0$ and $r=1$ at $S_{m+1}$. Our aim is to learn the value of each state $V(s)$, where the true values are $(1,\cdots, m)/(m+1)$.
In all cases the approximate value function is initialized to the intermediate value $V_0(s)=0.5$. 
We first consider tabular representation of the value function. 
We also consider linear approximation representation and obtain the similar results, which are reported in Fig.~\ref{fig:walk-linear} of the appendix.
We consider that the learning rate $\alpha_n$ is tapered according to the schedule $\alpha_n=\alpha(10^3+1)/(10^3+n)$.
We tune $\alpha\in \{10^{-12/4}, 10^{-11/4},\cdots, 10^{-1/4}, 1\}$.  
We also consider the constant step sizes  and obtain the similar results, which are reported in Fig.~\ref{fig:walk-constant} of the appendix.
For GTD2 and TDC, we set $\beta_n=\zeta\alpha_n$ with  $\zeta\in\{1/64,1/16,1/4,1,4\}$.

We first compare the methods by plotting the empirical RMS error from averaging the final 100 steps as a function of step size $\alpha$ in Fig.~\ref{fig:walk}, where 20,000 episodes are used.
We also plot the average error of all episodes during training and report these results in Fig.~\ref{fig:walk-area} of the Appendix. 
From these figures, we can make several observations. 
(1) Gradient-DD clearly performs better than the GTD2 and TDC methods.  
This advantage is consistent in various settings, and gets bigger as the state space becomes large. 
(2) Gradient-DD performs similarly to TDRC and conventional TD learning, with a similar dependence on $\alpha$, although Gradient-DD exhibits greater sensitivity to the value of $\alpha$ in the log domain than these other algorithms.
In summary, Gradient-DD exhibits clear advantages over the GTD2 algorithm, and its performance is also as good as TDRC and conventional TD learning. 

\begin{figure}[ht]
\centering
\includegraphics[keepaspectratio,width=0.32\linewidth]{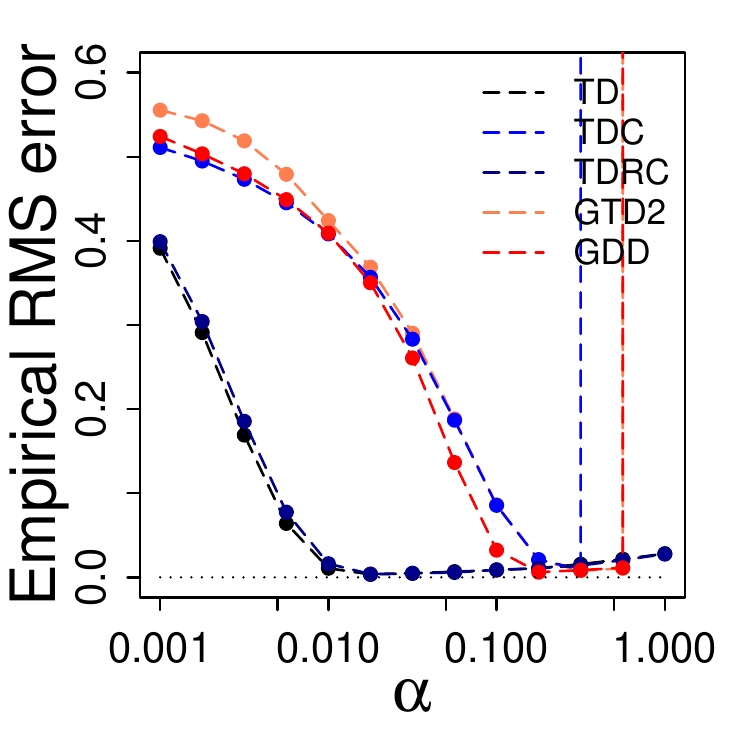}
\includegraphics[keepaspectratio,width=0.32\linewidth]{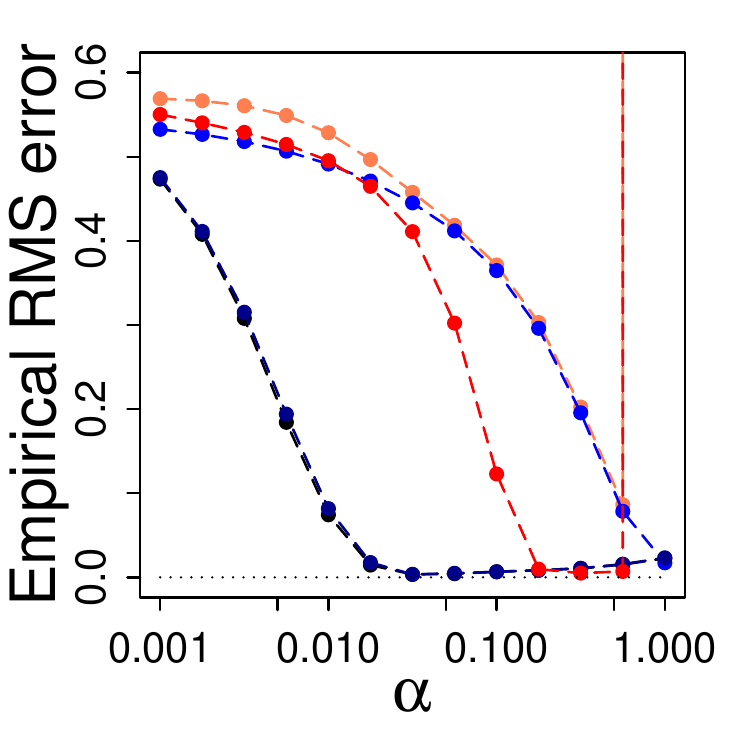}
\includegraphics[keepaspectratio,width=0.32\linewidth]{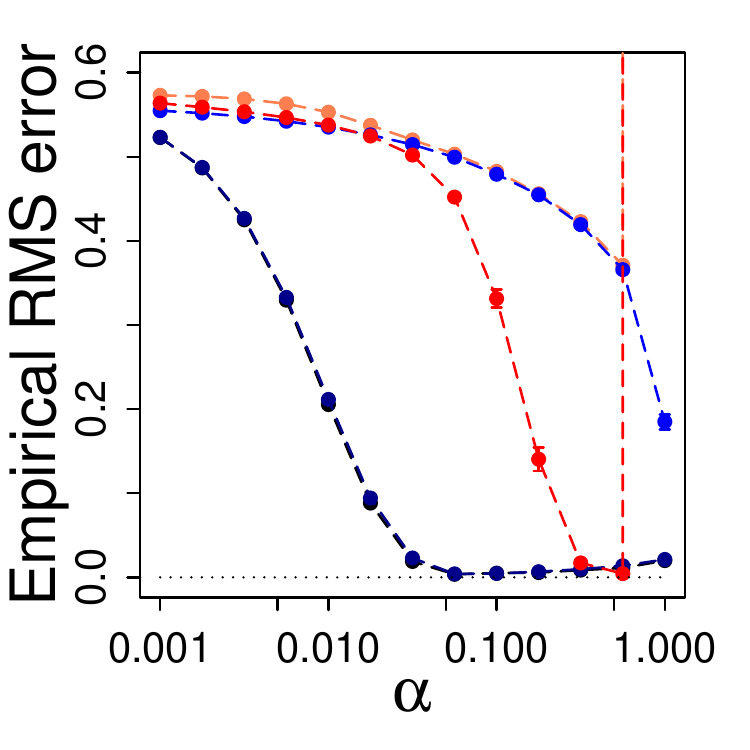}
\includegraphics[keepaspectratio,width=0.32\linewidth]{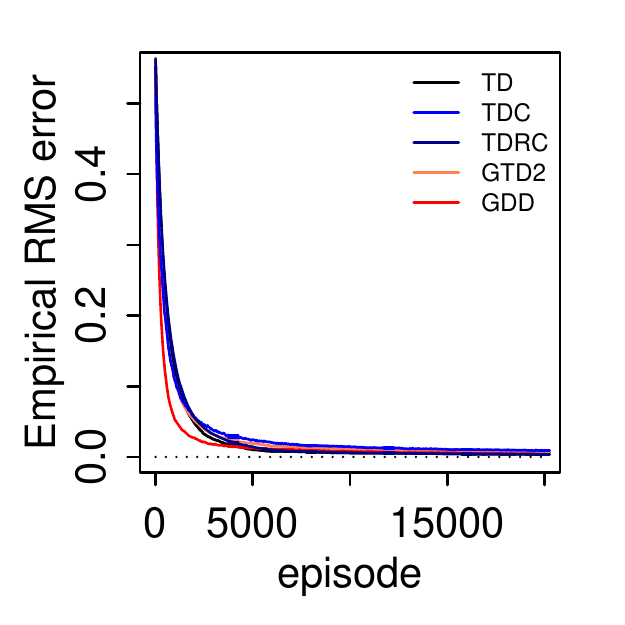}
\includegraphics[keepaspectratio,width=0.32\linewidth]{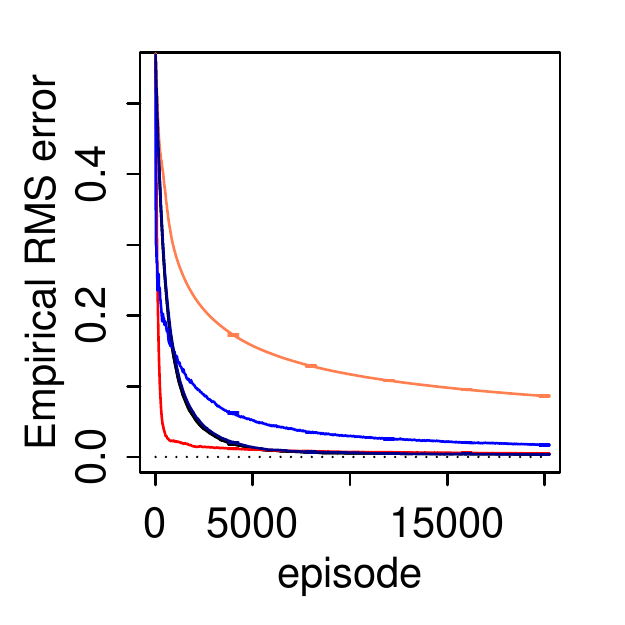}
\includegraphics[keepaspectratio,width=0.32\linewidth]{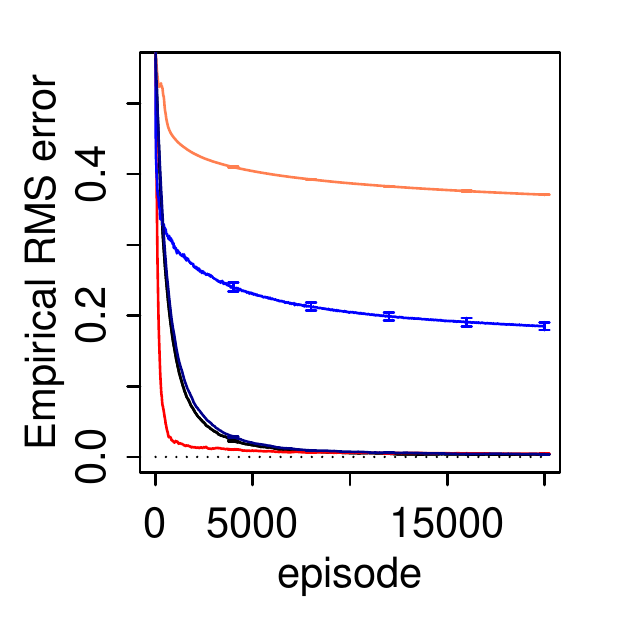}
\caption{The random walk task with tabular representation and tapering step size $\alpha_n=\alpha(10^3+1)/(10^3+n)$. 
Upper: Performance as a function of $\alpha$; Lower: performance over episodes. 
In each row,  state space size 10 (left), 20 (middle), or 40 (right). 
The curves are averaged over 50 runs, with error bars denoting the standard error of the mean, though most are vanishingly small. 
}
\label{fig:walk}
\end{figure}

Next we look closely at the performance during training in Fig.~\ref{fig:walk}.
For each method, we tuned $\alpha\in \{10^{-12/4}, \cdots, 10^{-1/4}, 1\}$ by minimizing the average error of the last 100 episodes. 
We also compare the performance when $\alpha$ is tuned by minimizing the average error of the last 100 episodes and report in Fig.~\ref{fig:walk-area} of the appendix.  
From these results, we draw several observations. 
(1) For all conditions tested, Gradient-DD converges much more rapidly than GTD2 and TDC. The results also indicate that Gradient-DD even converges as fast as TDRC and conventional TD learning. 
(2) The advantage of Gradient-DD learning over GTD2 grows as the state space increases in size. 
(3) Gradient-DD has consistent and good performance under both the constant step size setting and under the tapered step size setting.
In summary, the Gradient-DD learning curves in this task show improvements over other gradient-based methods and performance that matches conventional TD learning.

Like TDRC, the updates of Gradient-DD only have a single shared step size $\alpha_n$, i.e., $\beta_n=\alpha_n$, rather than two independent step sizes $\alpha_n$ and $\beta_n$ as in the GTD2 and TDC algorithms.
A possible concern is that the performance gains in our second-order algorithm could just as easily be obtained with existing methods by adopting this two-timescale approach, where the value function weights are updated with a smaller step size than the second set of weights.
Hence, in addition to investigating the effects of the learning rate, size of the state space, and magnitude of the regularization parameter, we also investigate the effect of using distinct values for the two learning rates, $\alpha_n$ and $\beta_n$, although we set $\beta_n=\zeta\alpha_n$ with $\zeta=1$.  To do this, we set $\beta_n=\zeta\alpha_n$ for GDD, with 
$\zeta\in\{1/64,1/16,1/4,1,4\}$, and report the results in Fig.~\ref{fig:walk-beta} of the appendix. The results show that comparably good performance of $\text{Gradient-DD}$ is obtained under these various $\zeta$, providing evidence that the second-order difference term in our approach provides an improvement beyond what can be obtained with previous gradient-based methods using the two time scale approach.

\subsection{Boyan-chain task}
We next investigate $\text{Gradient-DD}$ learning on the Boyan-chain problem, which is a standard task for testing linear value-function approximation \citep{Boyan02}.  
In this task we allow for $4p-3$ states, each of which is represented by a $p$-dimensional feature vector, with $p=20, 50, $ or $100$. 
The $p$-dimensional representation for every fourth state from the start is $[1,0,\cdots,0]$ for state $s_1$, $[0,1,0,\cdots,0]$ for $s_5$, $\cdots$, and $[0,0,\cdots,0,1]$ for the terminal state $s_{4p-3}$. The representations for the remaining states are obtained by linearly interpolating between these. 
The optimal coefficients of the feature vector are $(-4(p-1),-4(p-2),\cdots,0)/5$.
In each state, except for the last one before the end, there are two possible actions: move forward one step or move forward two steps, where each action occurs with probability 0.5. Both actions lead to reward -0.3. The last state before the end just has one action of moving forward to the terminal with reward -0.2.  
We tune $\alpha\in \{10^{-2},10^{-1.5},10^{-1},10^{-3/4},10^{-1/2},10^{-1/4},10^{-1/8},1,10^{1/8},10^{1/4}\}$ for each method by minimizing the average error of the final 100 episodes. 
The step size is tapered according to the schedule $\alpha_n=\alpha(2\times 10^3+1)/(2\times 10^3+n)$. 
For GTD2 and TDC, we set $\beta_n=\zeta\alpha_n$ with  $\zeta\in\{1/64,1/16,1/4,1,4\}$.
In this task, we set $\gamma = 1$. 

We report the performance as a function of $\alpha$ and the performance over episodes in Fig.~\ref{fig:boyan}, where we tune $\alpha$ by the performance based on the average error of the last 100 episodes. 
We also compare the performance based on the average error of all episodes during training and report the results in Fig.~\ref{fig:boyan-area} of the appendix.
These figures lead to conclusions similar to those already drawn in the random walk task. (1) $\text{Gradient-DD}$ has much faster convergence than $\text{GTD2}$ and $\text{TDC}$, and generally converges to better values.  (2) Gradient-DD is competitive with $\text{TDRC}$ and conventional TD learning despite being somewhat slower at the beginning episodes.
(3) The improvement over GTD2 or TDC grows as the state space becomes larger.  
(4) Comparing the performance with constant step size versus that with tapered step size, the Gradient-DD method performs better with tapered step size than it does with constant step size.

\begin{figure}[ht]
\centering
\includegraphics[keepaspectratio,width=0.32\linewidth]{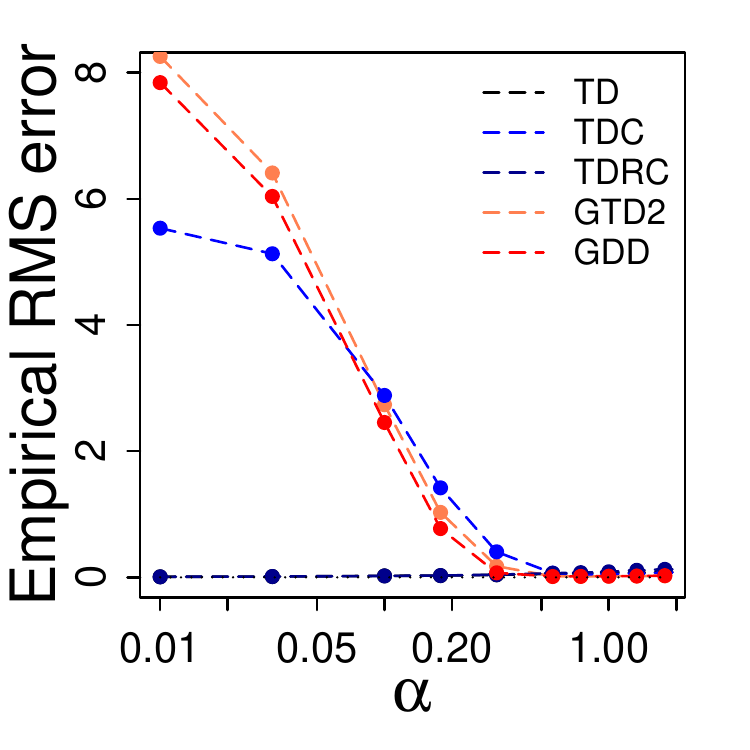}
\includegraphics[keepaspectratio,width=0.32\linewidth]{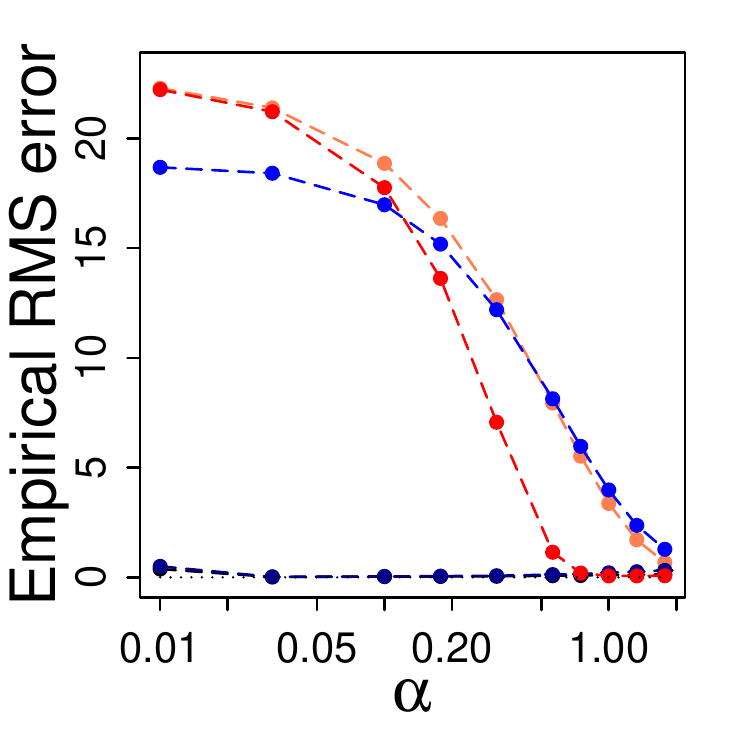}
\includegraphics[keepaspectratio,width=0.32\linewidth]{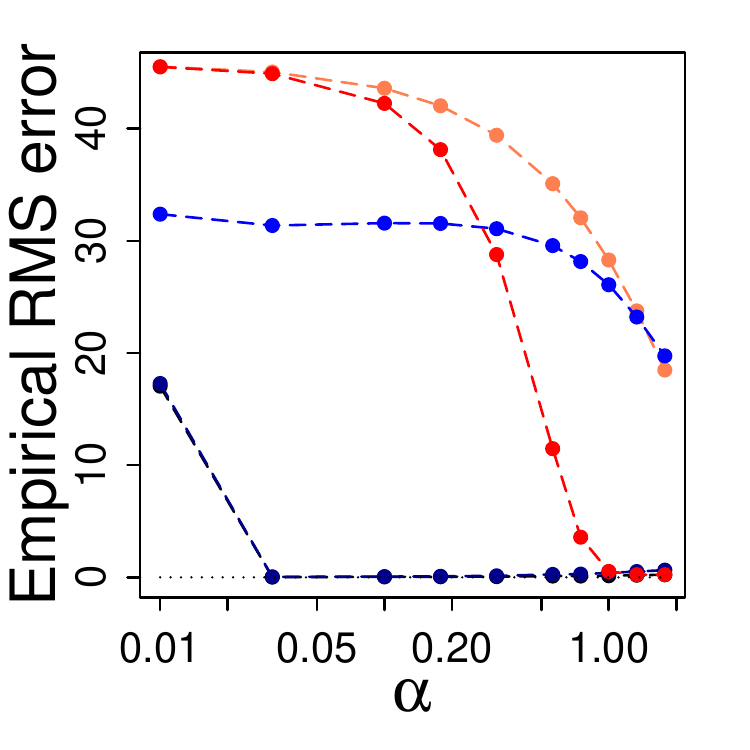}
\includegraphics[keepaspectratio,width=0.32\linewidth]{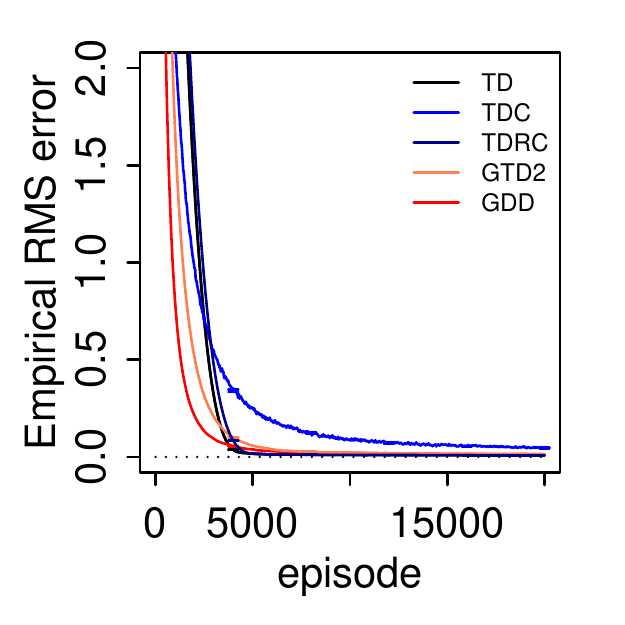}
\includegraphics[keepaspectratio,width=0.32\linewidth]{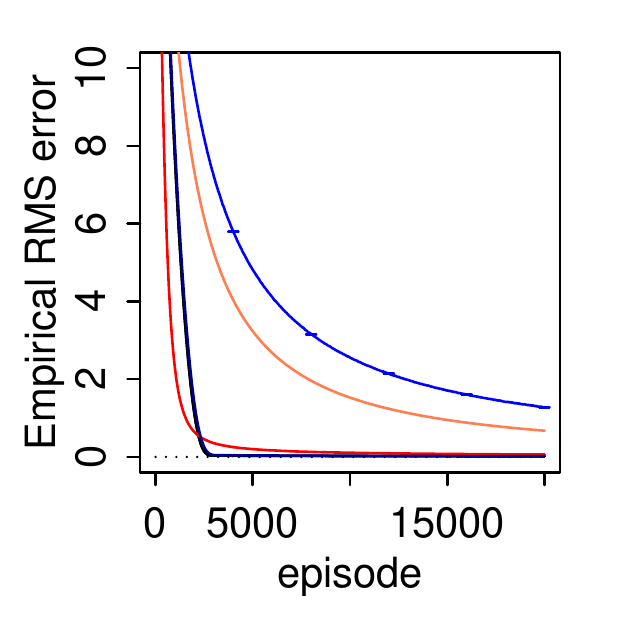}
\includegraphics[keepaspectratio,width=0.32\linewidth]{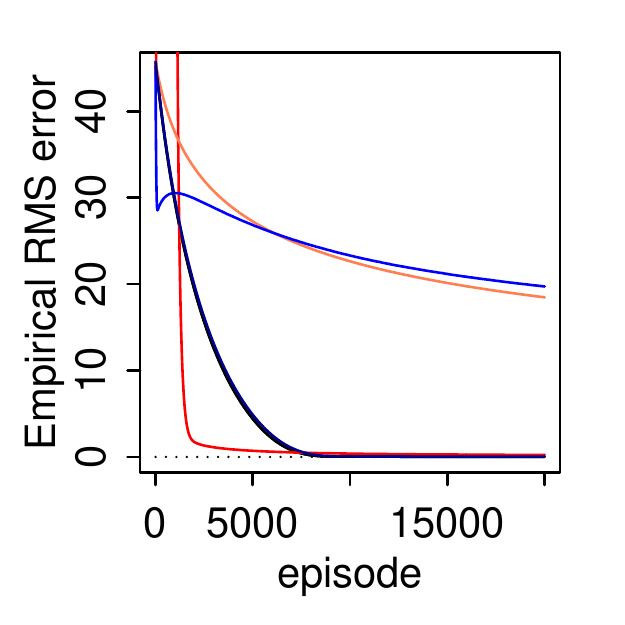}
\caption{The Boyan Chain task with linear approximation and tapering step size $\alpha_n=\alpha(2\times 10^3+1)/(2\times 10^3+n)$. 
Upper: Performance as a function of $\alpha$; Lower: performance over episodes. 
In each row, the feature size is 20 (left), 50 (middle), or 100 (right). 
The curves are averaged over 50 runs, with error bars denoting the standard error of the mean, though most are vanishingly small 
across runs. 
}
\label{fig:boyan}
\end{figure}

\subsection{Baird's off-policy counterexample}

We also verify the performance of Gradient-DD on Baird's off-policy counterexample \citep{Baird:95,Sutton:2018}, illustrated schematically in Fig.~\ref{fig:Baird}, for which TD learning famously diverges. 
We show results from Baird's counterexample with $N=7, 20$ states. 
The reward is always zero, and the agent learns a linear value approximation with $N+1$ weights $w_1,\cdots,w_{N+1}$:  
the estimated value of the $j$-th state is $2w_j+w_{N+1}$ for $j\leq N-1$ and that of the $N$-th state is $w_N+2w_{N+1}$. 
In the task, the importance sampling ratio for the dashed action is $(N-1)/N$, while the ratio for the solid action is $N$. 
Thus, comparing different state sizes illustrates the impact of importance sampling ratios in these algorithms. 
The initial weight values are $(1, \cdots, 1, 10, 1)^{\top}$. 
Constant $\alpha$ is used in this task. 
We set $\gamma=0.99$. 
For TDC and GTD2, thus we tune $\zeta\in\{4^{-2},4^{-1},1,4^2,4^3\}$. 
Meanwhile we tune $\alpha$ for TDC in a wider region. 
For Gradient-DD, we tune $\kappa\in\{4^{-1},1,4\}$. 
We tune $\alpha$ separately for each algorithm by minimizing the average error from the final 100 episodes.

Fig.~\ref{fig:Baird} demonstrates that Gradient-DD works better on this counterexample than GTD2, TDC, and TDRC. 
It is worthwhile to observe that when the state size is 20, TDRC become unstable, meaning serious unbalance of importance sampling ratios may cause TDRC unstable. We also note that, because the linear approximation leaves a residual error in the value estimation due to the projection error, the RMS errors of GTD2, TDC, and TDRC in this task do not go to zero. In contrast to other algorithms, the errors from our Gradient-DD converge to zero.

\begin{figure}[ht]
\begin{minipage}[c]{0.38\textwidth}
\begin{subfigure}[b]{\textwidth}
\centering
\includegraphics[keepaspectratio,width=\linewidth]{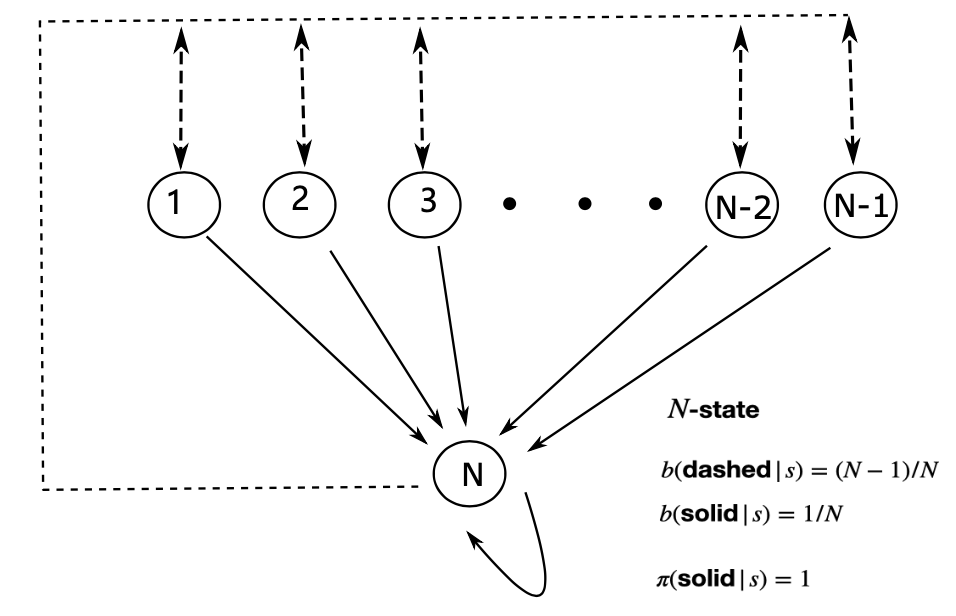} 
\vspace{0.5cm}
\subcaption{Illustration of the extended Baird’s off-policy counterexample. The ``solid" action usually goes to the $N$-th state, and the ``dashed" action usually goes to one of the other $N-1$ states, each with equal probability.  }
\end{subfigure}
\end{minipage}\hfill
\begin{minipage}[c]{0.6\textwidth}
\begin{subfigure}[b]{\textwidth}
\centering
\includegraphics[keepaspectratio,width=0.49\linewidth]{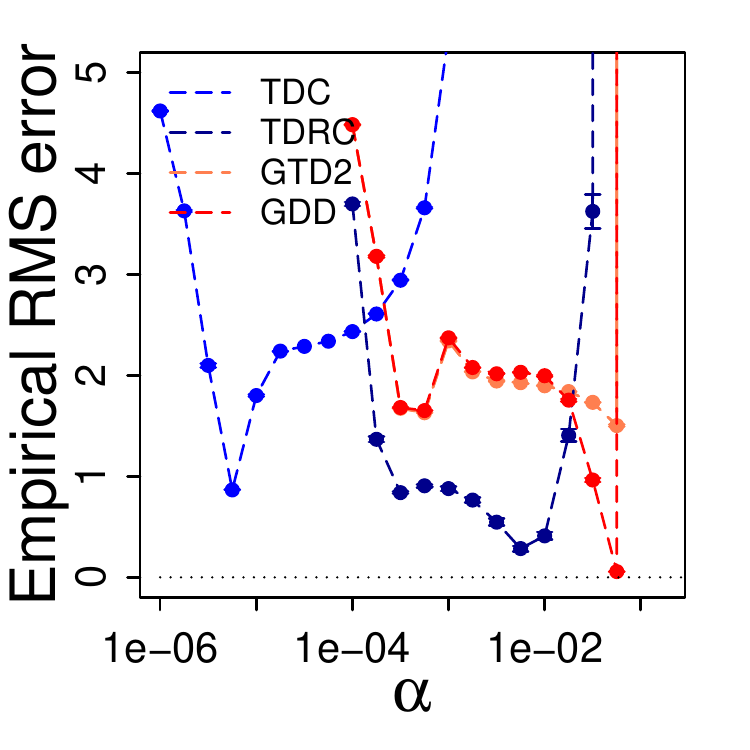}
\includegraphics[keepaspectratio,width=0.49\linewidth]{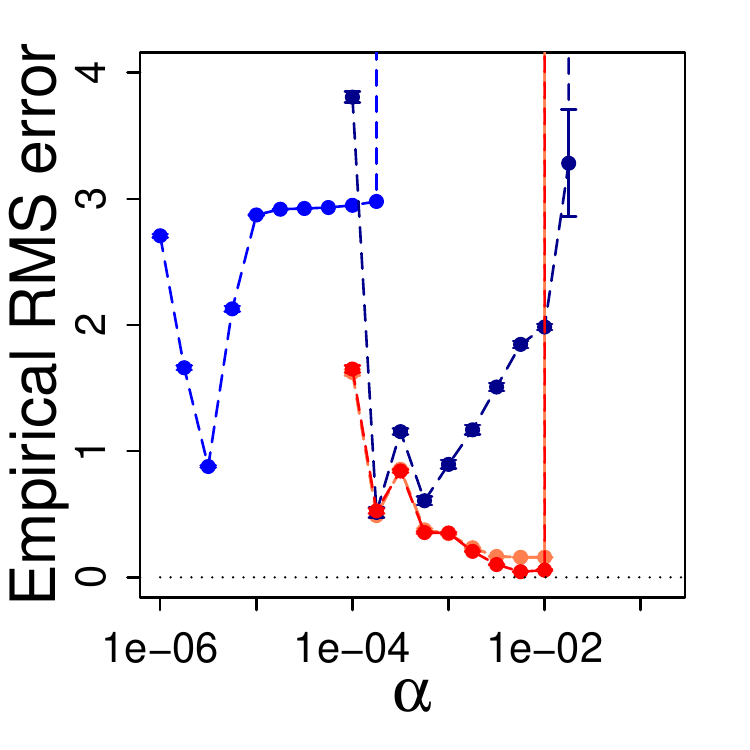}
\includegraphics[keepaspectratio,width=0.49\linewidth]{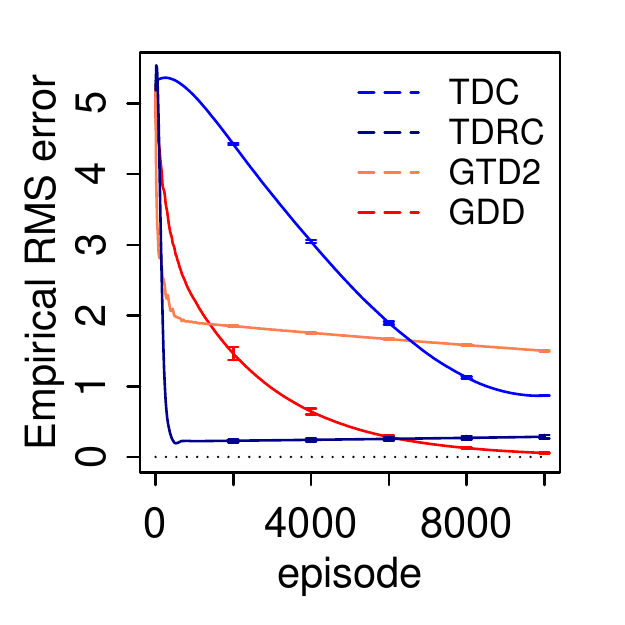}
\includegraphics[keepaspectratio,width=0.49\linewidth]{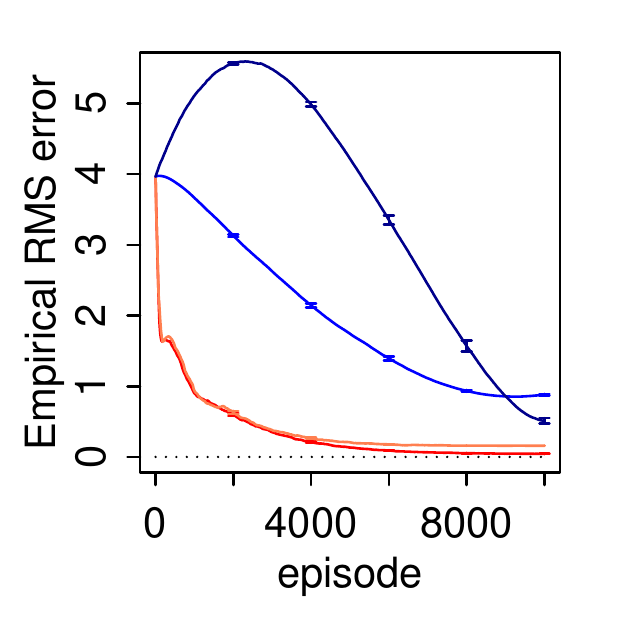}  
\subcaption{The performance of various algorithms.  }
\end{subfigure}
\end{minipage}
\caption{Baird’s off-policy counterexample. 
Upper in (b): Performance as a function of $\alpha$; Lower in (b): performance over episodes. 
From left to right in (b): 7-state  and 20-state.}
\label{fig:Baird}
\end{figure}

\section{Conclusion and discussion}
\label{sec:end}
In this work, we have proposed Gradient-DD learning, a new gradient descent-based TD learning algorithm. The algorithm is based on a modification of the projected Bellman error objective function for value function approximation by introducing a second-order difference term. 
The algorithm significantly improves upon existing methods for gradient-based TD learning, obtaining better convergence performance than conventional linear TD learning. 

Since GTD learning was originally proposed, the Gradient-TD family of algorithms has been extended to incorporate eligibility traces and learning optimal policies \citep{Maei:10,GS:14}, as well as for application to neural networks \citep{Maei:11}. 
Additionally, many variants of the vanilla Gradient-TD methods have been proposed, including HTD \citep{Hackman12} and Proximal Gradient-TD \citep{Liu16}. 
Because Gradient-DD just modifies the objective error of GTD2 by considering an additional squared-bias term,  it may be extended and combined with these other methods, potentially broadening its utility for more complicated tasks.

One potential limitation of our method is that it introduces an additional hyperparameter relative to similar gradient-based algorithms, which increases the computational requirements for hyperparameter optimization. This is somewhat mitigated by our finding that the algorithm's performance is not particularly sensitive to values of $\kappa$, and that $\kappa \sim 1$ was found to be a good choice for the range of environments that we considered.
Another potential limitation is that we have focused on value function prediction in the two simple cases of tabular representations and linear approximation, which has enabled us to provide convergence guarantees, but not yet for the case of nonlinear function approximation. 
An especially interesting direction for future study will be the application of Gradient-DD learning to tasks requiring more complex representations, including neural network implementations. Such approaches are especially useful in cases where state spaces are large, and indeed we have found in our results that Gradient-DD seems to confer the greatest advantage over other methods in such cases. Intuitively, we expect that this is because the difference between the optimal update direction and that chosen by gradient descent becomes greater in higher-dimensional spaces (cf. Fig.~\ref{fig:illustration}). This performance benefit in large state spaces suggests that Gradient-DD may be of practical use for these more challenging cases. 

\begin{ack}
This work was partially supported by the National Natural Science Foundation of China (No.11871459)
and by the Shanghai Municipal Science and Technology Major Project (No.2018SHZDZX01).
Support for this work was also provided by the 
National Science Foundation NeuroNex program (DBI-1707398) and the Gatsby Charitable Foundation.
\end{ack}

\vskip 0.2in
\bibliographystyle{unsrt}
\bibliography{DD_ref}

\appendix
\onecolumn
\section{Appendix}
\label{sec:appendix}
\setcounter{equation}{0}
\renewcommand{\theequation}{A.\arabic{equation}}
\setcounter{theorem}{0}

\subsection{On the equivalence of Eqns. (\ref{Q-TDD}) \& (\ref{Q-TDD-2}) }
\label{sec:KKT}

The Karush-Kuhn-Tucker conditions of Eqn.~(\ref{Q-TDD-2}) are the following system of equations
\begin{equation*}
\begin{cases}
\frac{d}{d\mbf{w}}J(\mbf{w})+\kappa\frac{d}{d\mbf{w}}(\|\mbf{V}_{\mbf{w}}-\mbf{V}_{\mbf{w}_{n-1}}\|_{\mbf{D}}^2-\mu)=0; \\
\kappa(\|\mbf{V}_{\mbf{w}}-\mbf{V}_{\mbf{w}_{n-1}}\|_{\mbf{D}}^2-\mu)=0; \\ 
\|\mbf{V}_{\mbf{w}}-\mbf{V}_{\mbf{w}_{n-1}}\|_{\mbf{D}}^2\leq \mu; \\
\kappa\geq 0. 
\end{cases}
\end{equation*}
These equations are equivalent to 
\begin{equation*}
\begin{cases}
\frac{d}{d\mbf{w}}J(\mbf{w})+\kappa\frac{d}{d\mbf{w}}\|\mbf{V}_{\mbf{w}}-\mbf{V}_{\mbf{w}_{n-1}}\|_{\mbf{D}}^2=0  \text{ and } \kappa>0,  \\ \ \ \ \ \ \ \ \ \ \ \ \ \ \ \ \ \ \ \  \ \ \ \ \ \ \ \ \ \ \ \  \ \ \ \ \ \ \ \ \ \ \text{ if }  \|\mbf{V}_{\mbf{w}}-\mbf{V}_{\mbf{w}_{n-1}}\|_{\mbf{D}}^2=\mu;\\ 
\frac{d}{d\mbf{w}}J(\mbf{w})=0  \text{ and } \kappa=0,  \text{ if }  \|\mbf{V}_{\mbf{w}}-\mbf{V}_{\mbf{w}_{n-1}}\|_{\mbf{D}}^2< \mu.
\end{cases}
\end{equation*}
Thus, for any $\mu>0$, there exists a $\kappa\geq 0$ such that $\frac{d}{d\mbf{w}}J(\mbf{w})+\mu\frac{d}{d\mbf{w}}\|\mbf{V}_{\mbf{w}}-\mbf{V}_{\mbf{w}_{n-1}}\|_{\mbf{D}}^2=0$.

\subsection{The relation to natural gradients}
\label{sec:nat}

In this section, we shall show that Gradient-DD is related to, but distinct from, the natural gradient. We thank a reviewer for pointing out the connection between Gradient-DD and the natural gradient.

Following \cite{Amari:98} or \cite{Thomas:14},  the natural gradient of $J(\mbf{w})$ is the direction obtained by solving the following optimization:
\begin{equation}\label{NG-w}
\lim_{\epsilon\rightarrow 0}\arg\min\limits_{\Delta}J(\mbf{w}+\epsilon\Delta) \text{ s.t. } \epsilon^2\Delta^{\top}\mbf{X}^{\top}\mbf{D}\mbf{X}\Delta\leq \mu.
\end{equation}
We can note that this corresponds to the ordinary gradient in the case where the metric tensor $\mbf{X}^\top\mbf{DX}$ is proportional to the identity matrix.

Now we rewrite Eqn.~(\ref{Q-TDD-2}) as
\begin{align*}
    \|\mbf{V}_{\mbf{w}}-\mbf{V}_{\mbf{w}_{n-1}}\|_{\mbf{D}}^2
    = (\mbf{w}-\mbf{w}_{n-1})^{\top}\mbf{X}^{\top}\mbf{D}\mbf{X}(\mbf{w}-\mbf{w}_{n-1}).
\end{align*}
Denote $\epsilon\Delta=\mbf{w}-\mbf{w}_{n-1}$, where $\epsilon$ is the radius of the circle of $\mbf{w}$ around $\mbf{w}_{n-1}$ and $\Delta$ is a unit vector. 
Thus, we have  
\begin{align*}
    \|\mbf{V}_{\mbf{w}}-\mbf{V}_{\mbf{w}_{n-1}}\|_{\mbf{D}}^2
    = \epsilon^2\Delta^{\top}\mbf{X}^{\top}\mbf{D}\mbf{X}\Delta.
\end{align*}
For the MSPBE objective, we have  
\begin{align*}
    J(\mbf{w})=J(\mbf{w}_{n-1}+\mbf{w}-\mbf{w}_{n-1})
    =J(\mbf{w}_{n-1}+\epsilon\Delta).
\end{align*}
Minimizing Eqn.~(\ref{Q-TDD-2}) is equivalent to the following optimization 
\begin{equation}\label{Q-TDD-NG}
\arg\min\limits_{\Delta}J(\mbf{w}_{n-1}+\epsilon\Delta) \text{ s.t. } \epsilon^2\Delta^{\top}\mbf{X}^{\top}\mbf{D}\mbf{X}\Delta\leq \mu. 
\end{equation}
In the limit as $\epsilon\rightarrow 0$, 
the above optimization is equivalent to 
\begin{equation*}
\arg\min\limits_{\Delta}\Delta^{\top}\nabla J(\mbf{w}_{n-1}) \text{ s.t. } \epsilon^2\Delta^{\top}\mbf{X}^{\top}\mbf{D}\mbf{X}\Delta\leq \mu.
\end{equation*}
Thus, given the metric tensor $\mbf{G}=\mbf{X}^{\top}\mbf{D}\mbf{X}$, $-\mbf{G}^{-1}\nabla J(\mbf{w}_{n-1})$ is the direction of steepest descent, \textit{i.e.}~the natural gradient, of $J(\mbf{w}_{n-1})$.
The natural gradient of $J(\mbf{w})$, on the other hand, is the direction of steepest descent at $\mbf{w}$, rather than at $\mbf{w}_{n-1}$. 

Therefore, our Gradient-DD approach makes use of the natural gradient of the objective around $\mbf{w}_{n-1}$ rather than around $\mathbf{w}_n$ in Eqn.~(\ref{NG-w}). This explains the distinction
of the updates of our Gradient-DD approach from the updates of directly applying the natural gradient of the objective $\mbf{w}$.  

\subsection{Proof of Theorem \ref{them:convergence-GDD}}
\label{sec:convergence-proof}

We introduce an ODE result on stochastic approximation in the following lemma, then show the convergence of our GDD approach in Theorem \ref{them:convergence-GDD} by applying this result.
\begin{lemma}\label{ode-bm}
(Theorems 2.1 \& 2.2 of \citep{BM:00})
Consider the stochastic approximation algorithm described by the $d$-dimensional recursion
$$\mbf{y}_{n+1}=\mbf{y}_n+a_n[f(\mbf{y}_n)+\mbf{M}_{n+1}].$$
Suppose the following conditions hold: 
(c1) The sequence $\{\alpha_n\}$ satisfies  $0<\alpha_n<1$, $\sum_{n=1}^n\alpha_n=\infty$,  $\sum_{n=1}^n\alpha_n^2<\infty$.
(c2) The function $f$ is Lipschitz, and there exists a function $f_{\infty}$ such that $\lim_{r\rightarrow\infty}f_r(\mbf{y})=f_{\infty}(\mbf{y})$, 
where the scaled function $f_r: \mathbb{R}^d\rightarrow\mathbb{R}^d$ is given by $f_r(\mbf{y})=f(r\mbf{y})/r$. 
(c3) The sequence $\{\mbf{M}_n, \mathcal{F}_n\}$, with $\mathcal{F}_n=\sigma(\mbf{y}_i,\mbf{M}_i, i\leq n)$, is a martingale difference sequence. 
(c4) For some $c_0<\infty$ and any initial condition $y_0$, $\text{E}(\|\mbf{M}_{n+1}\|^2|\mathcal{F}_n)\leq c_0(1+\|\mbf{y}_n\|^2)$. 
(c5) The ODE $\dot{\mbf{y}}=f_{\infty}(\mbf{y})$ has the origin as a globally asymptotically stable equilibrium. 
(c6) The ODE $\dot{\mbf{y}}(t)=f(\mbf{y}(t))$ has a unique globally asymptotically stable equilibrium $\mbf{y}^*$. 
Then (1) under the assumptions (c1-c5), $\sup_n \mbf{y}_n<\infty$ in probability.
(2) under the assumptions (c1-c6), as $n\rightarrow \infty$, $\mbf{y}_n$ converges to $\mbf{y}^*$ with probability 1 . 
\end{lemma}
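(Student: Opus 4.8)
The plan is to establish both parts by the ODE method of stochastic approximation, treating the interpolated iterate trajectory as an approximate solution of an associated ODE on successive time windows. Because the two parts require structurally different arguments --- Part (1) hinges on the behaviour of the \emph{scaled} dynamics at infinity, while Part (2) is a standard tracking argument once boundedness is in hand --- I would prove them in sequence, using the output of Part (1) as the input to Part (2). Throughout, define the cumulative time $t_n = \sum_{i=0}^{n-1} a_i$ (so $t_n \to \infty$ by (c1)) and let $\bar{\mbf{y}}(\cdot)$ be the piecewise-linear interpolation with $\bar{\mbf{y}}(t_n) = \mbf{y}_n$, so that on each fixed-length time window the discrete recursion becomes a Riemann-sum approximation to the flow of $\dot{\mbf{y}} = f(\mbf{y})$ up to a martingale error.

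For Part (1), the core is a rescaling/fluid-limit argument. Fix a window length $T>0$ and let $T_k$ denote the interpolation times lying on a $T$-spaced grid. Over each window $[T_k, T_{k+1})$ rescale the trajectory by $r_k := \max(\|\bar{\mbf{y}}(T_k)\|, 1)$, obtaining $\hat{\mbf{y}}(t) = \bar{\mbf{y}}(t)/r_k$, whose value at $T_k$ lies on the unit ball. Using the homogenized limit $f_\infty$ from (c2) together with the Lipschitz property and the moment control (c3)--(c4), I would show that on each window $\hat{\mbf{y}}(\cdot)$ uniformly approximates the solution of the scaled ODE $\dot{\mbf{y}} = f_\infty(\mbf{y})$ started from $\hat{\mbf{y}}(T_k)$. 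Since the origin is globally asymptotically stable for this scaled ODE by (c5), there is a finite $T$ for which every such solution contracts into a ball of radius $1/4$; the approximation then forces $\|\bar{\mbf{y}}(T_{k+1})\| \le \tfrac{1}{2}\, r_k$ whenever $r_k$ is large. This multiplicative contraction, applied across windows, prevents escape to infinity and yields $\sup_n \|\mbf{y}_n\|$ bounded in probability (indeed almost surely, which is the form Part (2) will consume).

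For Part (2), with boundedness from Part (1) established, I would invoke the standard ODE-tracking argument on the (now compact) set where the iterates live. There $f$ is Lipschitz by (c2), so a Gronwall estimate combined with the vanishing of the accumulated noise --- guaranteed by (c1), (c3)--(c4), and the almost-sure convergence of the square-integrable martingale $\sum_i a_i \mbf{M}_{i+1}$ on the bounded event --- shows that $\bar{\mbf{y}}(\cdot)$ is an asymptotic pseudotrajectory of $\dot{\mbf{y}} = f(\mbf{y})$: over each fixed window the interpolated path stays uniformly close to the ODE solution with the matching initial condition. Since (c6) supplies a unique globally asymptotically stable equilibrium $\mbf{y}^*$, the limit set of $\bar{\mbf{y}}(\cdot)$ must be nonempty, compact, invariant, and internally chain transitive, hence equal to $\{\mbf{y}^*\}$, giving $\mbf{y}_n \to \mbf{y}^*$ almost surely.

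I expect the main obstacle to be the stability step, Part (1): making rigorous the uniform approximation of the scaled ODE on each rescaled window while simultaneously controlling the accumulation of discretization and martingale errors across \emph{infinitely many} windows. The delicate point is to ensure that the contraction supplied by (c5) dominates the error terms uniformly in $k$ once $r_k$ is large, which requires choosing the window length $T$ relative to the global-asymptotic-stability rate of $\dot{\mbf{y}}=f_\infty(\mbf{y})$ and establishing a tightness estimate for the rescaled noise; this last point is where the linearly growing variance bound in (c4), rather than a uniform bound, makes the probabilistic control subtle.
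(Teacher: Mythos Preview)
Your sketch is a faithful outline of the Borkar--Meyn ODE/fluid-limit argument and is essentially correct as a proof strategy for this statement. However, you should note that the paper does \emph{not} prove this lemma at all: it is quoted verbatim as Theorems~2.1 and~2.2 of Borkar--Meyn (2000) and used as a black box in the proof of Theorem~\ref{them:convergence-GDD}. So there is nothing in the paper to compare your argument against; you are supplying a proof where the paper simply cites one.

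That said, a brief remark on your outline relative to the original source: your two-part plan (rescaling/contraction for stability, then asymptotic-pseudotrajectory tracking for convergence) is exactly the architecture of the Borkar--Meyn proof, and your identification of the delicate point --- uniform control of the rescaled martingale error across infinitely many windows under the linearly growing variance bound (c4) --- is accurate. One small point worth tightening if you flesh this out: in Part~(1) you will want the a.s.\ boundedness conclusion (not merely boundedness in probability) to feed into Part~(2), and indeed the Borkar--Meyn argument delivers this; the paper's phrasing ``$\sup_n \mbf{y}_n < \infty$ in probability'' is slightly weaker than what the cited theorem actually gives and what the subsequent a.s.\ convergence argument consumes.
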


Now we investigate the stochastic gradient descent updates in Eqn.~(\ref{update-2n}), which is recalled as follows: 
\begin{align}\label{GDD-TwoTime}
\sbf{\rho}_{n+1}&=\sbf{\rho}_n-\kappa\alpha_n \mbf{H}_n(\sbf{\rho}_n-\sbf{\rho}_{n-1})+\alpha_n(\mbf{G}_n\sbf{\rho}_n+\mbf{g}_{n+1}).
\end{align}
The iterative process in  Eqn.~(\ref{GDD-TwoTime}) can be rewritten as
\begin{align}
(\sbf{\rho}_{n+1}-\sbf{\rho}_n) & =-\kappa\alpha_n \mbf{H}_n(\sbf{\rho}_{n}-\sbf{\rho}_{n-1})+\alpha_n(\mbf{G}_n\sbf{\rho}_{n}+\mbf{g}_{n+1}).
\label{GDD-TwoTimeD}
\end{align}
Defining 
$$\mbf{u}_{n+1}=\sbf{\rho}_{n+1}-\sbf{\rho}_n.$$
Eqn.~(\ref{GDD-TwoTimeD}) becomes 
\begin{align*}
\mbf{u}_{n+1} & =-\kappa\alpha_n \mbf{H}_n\mbf{u}_{n}+\alpha_n(\mbf{G}_n\sbf{\rho}_{n}+\mbf{g}_{n+1}).
\end{align*}
Thus, the iterative process in  Eqn.~(\ref{GDD-TwoTime}) is rewritten as two parallel processes that are given by
\begin{align}
\sbf{\rho}_{n+1}&=\sbf{\rho}_{n}-\kappa\alpha_n\mbf{H}_n\mbf{u}_{n}+\alpha_n(\mbf{G}_n\sbf{\rho}_{n}+\mbf{g}_{n+1}),\label{GDD-TwoTimeV1}\\
\mbf{u}_{n+1} & =-\kappa\alpha_n\mbf{H}_n\mbf{u}_{n}+\alpha_n(\mbf{G}_n\sbf{\rho}_{n}+\mbf{g}_{n+1}).\label{GDD-TwoTimeV2}
\end{align}


Our proofs have three steps. First we shall show $\sup_n \|\sbf{\rho}_n\|$ is bounded by applying the ordinary differential equation approach of the stochastic approximation (the 1st result of Lemma \ref{ode-bm}) into the recursion Eqn.~(\ref{GDD-TwoTimeV1}). 
Second, based on this result, we shall show that $\mbf{u}_n$ goes to 0 in probability by analyzing the recursion Eqn.~(\ref{GDD-TwoTimeV2}). At last, along with the result that $\mbf{u}_n$ goes to 0 in probability, by applying the 2nd result of Lemma \ref{ode-bm} into the recursion Eqn.~(\ref{GDD-TwoTimeV1}), we show that $\sbf{\sbf{\rho}}_n$ goes to the TD fixed point, which is given by the solution of $\mbf{G}\sbf{\rho}+\mbf{g}=0$. 

First, we shall show $\sup_n \|\sbf{\rho}_n\|$ is bounded. 
Eqn.~(\ref{GDD-TwoTimeV1}) is rewritten as  
\begin{equation}\label{update-n2inv2}
\sbf{\rho}_{n+1}=\sbf{\rho}_{n}+\alpha_n(f(\sbf{\rho}_n)+\mbf{M}_{n+1}),
\end{equation}
where $f(\sbf{\rho}_n)=(\mbf{G}\sbf{\rho}_{n}+\mbf{g})-\kappa\mbf{H}\mbf{u}_n$ and $\mbf{M}_{n+1}=((\mbf{G}_n-\mbf{G})\sbf{\rho}_{n}+\mbf{g}_{n+1}-\mbf{g})-\kappa(\mbf{H}_n-\mbf{H})\mbf{u}_n$.
Let $\mathcal{F}_n=\sigma(\mbf{u}_0,\sbf{\rho}_0,\mbf{M}_0, \mbf{u}_1,\sbf{\rho}_1,\mbf{M}_1, \cdots, \mbf{u}_n,\sbf{\rho}_n,\mbf{M}_n)$ be $\sigma$-fields generated by the quantities $\mbf{u}_i,\sbf{\rho}_i,\mbf{M}_i$, $i\leq n$.  
 
Now we verify the conditions (c1-c5) of Lemma \ref{ode-bm}.
Condition (c1) is satisfied under the assumption of the step sizes. Clearly, $f(\mbf{u})$ is Lipschitz and $f_{\infty}(\sbf{\rho})=\mbf{G}\sbf{\rho}$, meaning Condition (c2) is satisfied. 
Condition (c3) is satisfied by noting that $(M_{n}, \mathcal{F}_n)$ is a martingale difference sequence, i.e., $\text{E}(\mbf{M}_{n+1}|\mathcal{F}_n)=0$.  

We next investigate $\text{E}(\|\mbf{M}_{n+1}\|^2|\mathcal{F}_n)$. 
From the triangle inequality, 
we have that
\begin{align}\label{M-bound}
\|\mbf{M}_{n+1}\|^2\leq 2\|(\mbf{G}_n-\mbf{G})\|^2\|\sbf{\rho}_n\|^2
+2\|\kappa(\mbf{H}_n-\mbf{H})\|^2\|\mbf{u}_n\|^2.
\end{align}
From Assumption A3 in Theorem \ref{them:convergence-GDD} that $\|\mbf{u}_n\|$ is bounded and the Assumption A1 that ($\mbf{x}_n, r_n, \mbf{x}_{n+1}$) is an i.i.d. sequence with uniformly bounded second moments, there exists some constant $c_0$ such that 
\begin{align*}
\|\mbf{G}_n-\mbf{G}\|^2\leq c_0/2, 
\text{ and } \|\kappa(\mbf{H}_n-\mbf{H})\|^2\|\mbf{u}_n\|^2\leq c_0/2. 
\end{align*}
Thus, Condition (c4) is satisfied.

Note that $\mbf{G}$ is defined here with opposite sign relative to $\mbf{G}$ in \citep{Maei:11}.
From \citep{SSM09} and \citep{Maei:11}, the eigenvalues of the matrix $-\mbf{G}$ are strictly negative under the Assumption A2. Therefore, Condition (c5) is satisfied. 
Thus, applying the 1st part of Lemma \ref{ode-bm} shows that $\sup_n \|\sbf{\rho}_n\|$ is bounded in probability.

Second, we investigate the recursion Eqn.~(\ref{GDD-TwoTimeV2}). 
Let $\mbf{y}_{n+1}=(\mbf{G}_n\sbf{\rho}_{n}+\mbf{g}_{n+1})$. Then
\begin{align}\label{GDD-TwoTimeV2-bound}
\mbf{u}_{n+1} & =\alpha_n[-\kappa\mbf{H}_n\mbf{u}_{n}+\mbf{y}_{n+1}]\notag\\
& = \alpha_n\mbf{y}_{n+1}+
\alpha_n\alpha_{n-1}(-\kappa\mbf{H}_n)\mbf{y}_{n}+\alpha_n\alpha_{n-1}\alpha_{n-2}(-\kappa\mbf{H}_n)(-\kappa\mbf{H}_{n-1})\mbf{y}_{n-1}\notag\\
&\quad +\cdots + \alpha_n\prod_{k=0}^{n-1}\alpha_k(-\kappa\mbf{H}_{k+1})\mbf{y}_1+\prod_{k=0}^n\alpha_k(-\kappa\mbf{H}_k)\mbf{u}_0.
\end{align}
Note that $\|\mbf{H}_n\|\leq 1/\kappa$ due to $\|\mbf{x}_n\|\leq 1/\kappa$ and that there exists a constant $c$ such that $\|\sbf{\rho}_{n}\|\leq c$ due to the above result that $\sup_n\|\sbf{\rho}_{n}\|<\infty$ in probability. Without loss of generality, we assume that $\|\mbf{x}_n\|\leq 1/\kappa$.   
Eqn. (\ref{GDD-TwoTimeV2-bound}) implies that 
\begin{align}\label{GDD-TwoTimeV2-bound2}
\|\mbf{u}_{n+1}\| & \leq c\left(\alpha_n+
\alpha_n\alpha_{n-1}+\alpha_n\alpha_{n-1}\alpha_{n-2}+\cdots + \prod_{k=0}^{n}\alpha_k\right)+\prod_{k=0}^n\alpha_k\|\mbf{u}_0\|. 
\end{align}
Under Assumption A0, $\alpha_n \rightarrow 0$ as $n\rightarrow 0$. Based on this, Lemma \ref{lem:key} (given in the following section) tells us that $\alpha_n+
\alpha_n\alpha_{n-1}+\alpha_n\alpha_{n-1}\alpha_{n-2}+\cdots + \prod_{k=0}^{n}\alpha_k\rightarrow 0$ as $n\rightarrow 0$. 
Thus, Eqn. (\ref{GDD-TwoTimeV2-bound2}) implies that $\mbf{u}_{n}\rightarrow 0$ in probability. 

Finally, for applying the 2nd part of Lemma \ref{ode-bm}, we just need to verify Condition (c6). 
Because $\mbf{u}_n$ goes to 0 with probability 1, Eqn. (\ref{update-n2inv2}) tells us that the associated ODE corresponds to $$\mbf{G}\sbf{\rho}+\mbf{g}=0.$$
Thus, Condition (c6) is satisfied. 
The theorem is proved. 

\subsection{A Lemma}
\begin{lemma}\label{lem:key}
Denote $\epsilon_n=\alpha_n+\alpha_n\alpha_{n-1}+\cdots+\alpha_n\alpha_{n-1}\cdots\alpha_0$.
If $\alpha_n\rightarrow 0$ as $n\rightarrow \infty$, then $\epsilon_n\rightarrow 0$ as $n\rightarrow \infty$.
\end{lemma}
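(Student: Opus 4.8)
The plan is to reduce everything to a one-step scalar recursion. Factoring $\alpha_n$ out of every term in $\epsilon_n=\alpha_n+\alpha_n\alpha_{n-1}+\cdots+\alpha_n\alpha_{n-1}\cdots\alpha_0$ gives
$$\epsilon_n=\alpha_n\bigl(1+\alpha_{n-1}+\alpha_{n-1}\alpha_{n-2}+\cdots+\alpha_{n-1}\cdots\alpha_0\bigr)=\alpha_n(1+\epsilon_{n-1}),$$
with the convention $\epsilon_{-1}=0$, so that $\epsilon_0=\alpha_0$. Since $\alpha_n>0$ we also have $\epsilon_n\ge 0$ for all $n$. The whole problem thus becomes: show that the nonnegative sequence defined by $\epsilon_n=\alpha_n(1+\epsilon_{n-1})$ tends to $0$ whenever $\alpha_n\to 0$.

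The first real step is to establish that $\{\epsilon_n\}$ is bounded. Because $\alpha_n\to 0$, I would pick $N$ such that $\alpha_n\le 1/2$ for all $n\ge N$. Then for $n>N$ the recursion gives $\epsilon_n\le \tfrac12(1+\epsilon_{n-1})$, so if $\epsilon_{n-1}\le 1$ then $\epsilon_n\le 1$, while if $\epsilon_{n-1}>1$ then $\epsilon_n<\epsilon_{n-1}$. By induction $\epsilon_n\le M:=\max\{\epsilon_N,1\}$ for every $n\ge N$, hence $\sup_n\epsilon_n<\infty$.

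Finally I would plug this uniform bound back into the recursion: for $n>N$,
$$0\le \epsilon_n=\alpha_n(1+\epsilon_{n-1})\le (1+M)\,\alpha_n,$$
and since $\alpha_n\to 0$, the squeeze theorem yields $\epsilon_n\to 0$, which is the claim.

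I do not expect a genuine obstacle here. The only point that needs a moment's care is that one cannot immediately bound $\epsilon_n$ term by term — a priori it could diverge if the $\alpha_k$ were large — so the argument must first use the fact that $\alpha_n\to 0$ to get $\alpha_n\le 1/2$ eventually, which turns the recursion into a contraction and pins $\epsilon_n$ below a finite constant; after that the conclusion is a two-line squeeze.
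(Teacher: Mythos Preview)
Your proof is correct and follows essentially the same route as the paper: both arguments first extract the recursion $\epsilon_n=\alpha_n(1+\epsilon_{n-1})$, then use the eventual smallness of $\alpha_n$ to show $\sup_n\epsilon_n<\infty$, and finally plug the bound back into the recursion to obtain $\epsilon_n\le(1+M)\alpha_n\to 0$. The only cosmetic difference is in the boundedness step—where the paper compares to an explicit dominating sequence $\varepsilon_n=1+\alpha\varepsilon_{n-1}$ and sums a geometric series, you instead take $\alpha_n\le 1/2$ and use a short case-by-case induction—but the underlying idea and structure are the same.
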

\begin{proof}
Because $\alpha_n\rightarrow 0$ as $n\rightarrow \infty$, there exists $\alpha\in(0,1)$ and some integer $N$ such that $\alpha_n\leq \alpha<1$ when $n\geq N$. 
Define a sequence $\varepsilon_n$ such that 
\begin{align*}
    \varepsilon_n&=1+\alpha\varepsilon_{n-1} \text{ for } n\geq N+1;\notag\\
    \varepsilon_N&=\epsilon_N. 
\end{align*}
Obviously, 
\begin{equation}\label{twoSeqCompare}
\epsilon_n\leq \varepsilon_n, \forall n\geq N.
\end{equation}
Now we investigate the sequence $\varepsilon_n$. 
\begin{align*}
    \varepsilon_n&=1+\alpha\varepsilon_{n-1}=1+\alpha(1+\alpha\varepsilon_{n-2})=\cdots=1+\alpha+\cdots+\alpha^{n-N-1}+\alpha^{n-N}\varepsilon_N\notag\\
    &\leq \sum\nolimits_{k=0}^{\infty}\alpha^k+\alpha^{n-N}\varepsilon_N=1/(1-\alpha)+\alpha^{n-N}\varepsilon_N. 
\end{align*}
Thus, we have that 
\begin{equation}\label{Seq2Bound}
\sup_{n\geq N}\varepsilon_n <\infty.
\end{equation}
From Eqns. (\ref{twoSeqCompare}) \& (\ref{Seq2Bound}), 
we have 
\begin{equation}\label{Seq1Bound}
\sup_{n\geq 0}\epsilon_n <\infty.
\end{equation}
From the definition of $\epsilon_n$, we have that 
$\epsilon_n =\alpha_n+\alpha_n\epsilon_{n-1}$. 
It follows that 
$$\alpha_n=\frac{\epsilon_n}{1+\epsilon_{n-1}}
\geq \frac{\epsilon_n}{1+\sup_{k\geq 0}\epsilon_{k}}.$$
From the assumption $\alpha_n\rightarrow 0$ as $n\rightarrow \infty$ and Eqn. (\ref{Seq1Bound}), we have $\epsilon_n\rightarrow 0$ as $n\rightarrow \infty$. 
\end{proof}

\subsection{Additional empirical results}

\begin{figure*}[ht]
\centering
\includegraphics[keepaspectratio,width=0.32\linewidth]{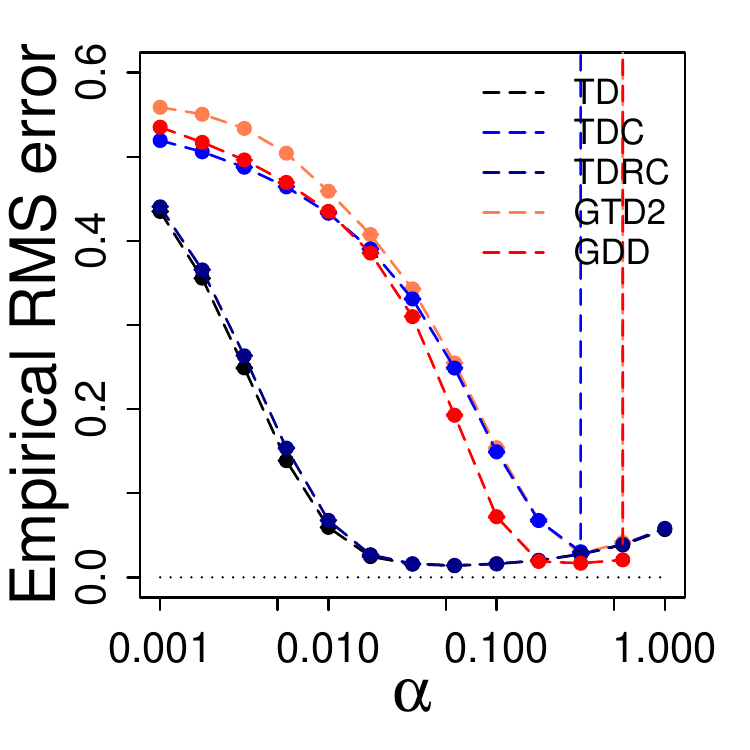}
\includegraphics[keepaspectratio,width=0.32\linewidth]{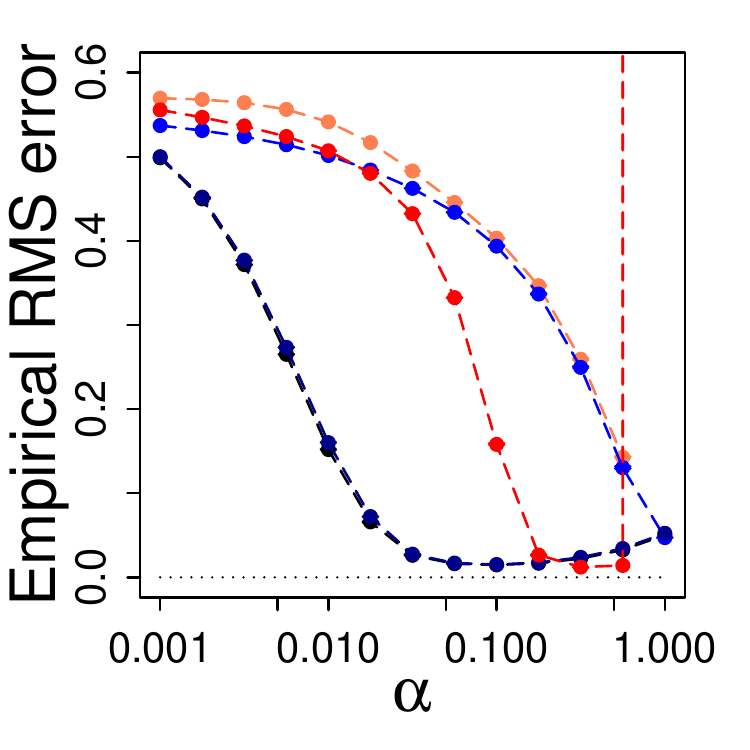}
\includegraphics[keepaspectratio,width=0.32\linewidth]{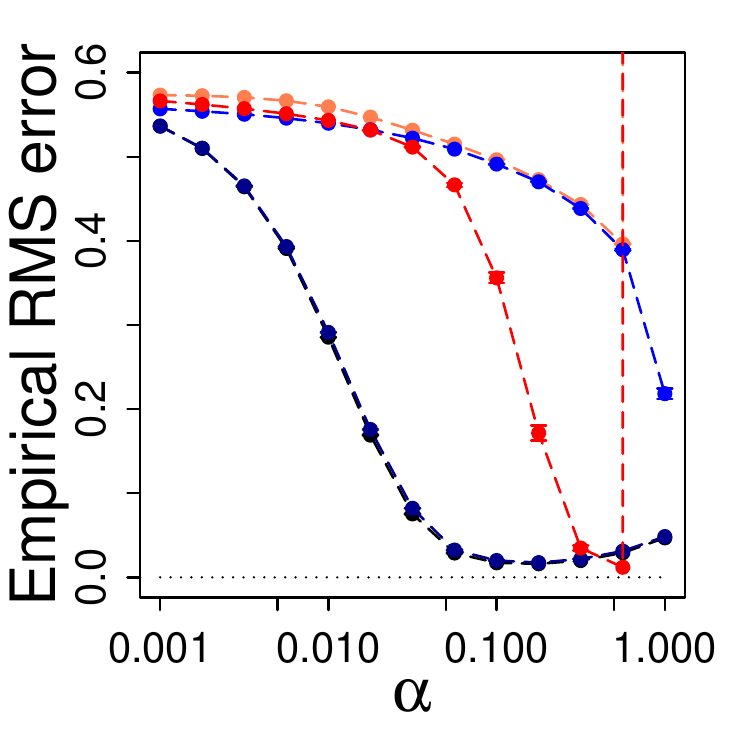}
\includegraphics[keepaspectratio,width=0.32\linewidth]{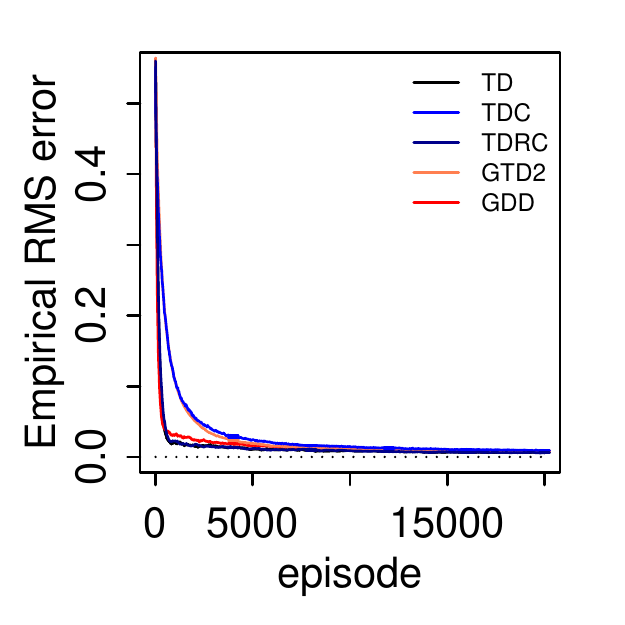}
\includegraphics[keepaspectratio,width=0.32\linewidth]{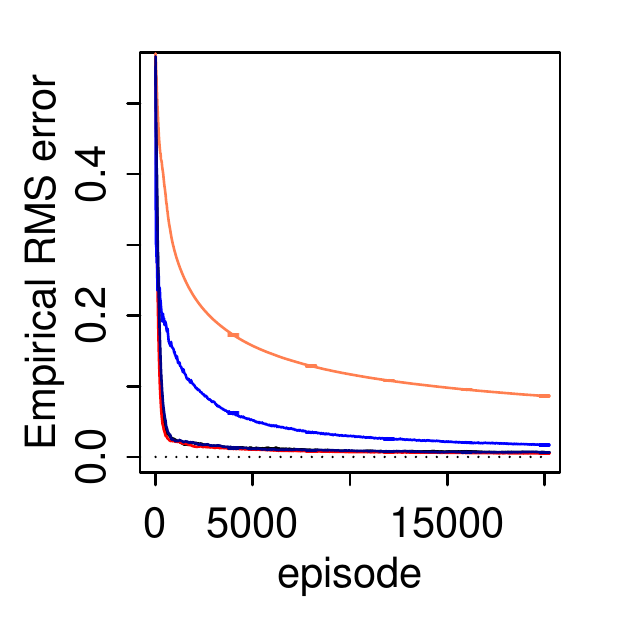}
\includegraphics[keepaspectratio,width=0.32\linewidth]{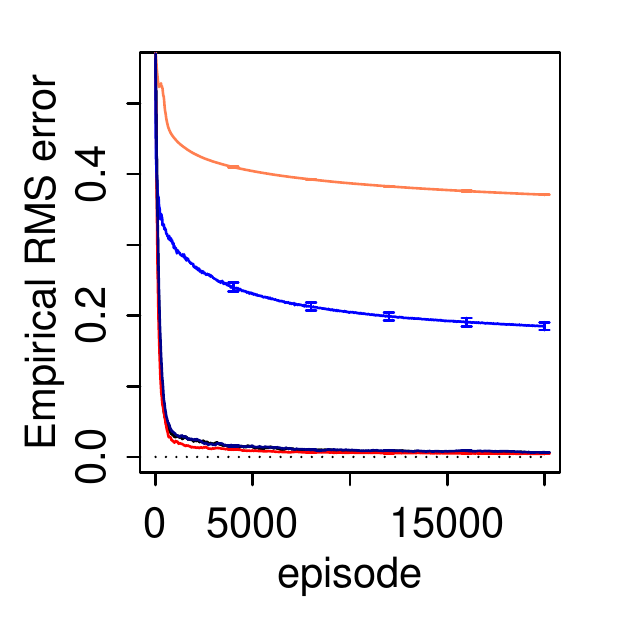}
\vspace{-0.1cm}
\caption{The random walk task with tabular representation. 
The setting is similar to Fig.~\ref{fig:walk}, but the performance is evaluated by the average error of all episodes, and 
$\alpha$ is tuned by minimizing the average error of all episodes. 
Upper: Performance as a function of $\alpha$; Lower: performance over episodes.
From left to right: state space size 10 (left), 20 (middle), or 40 (right). 
}
\label{fig:walk-area}
\vspace{-0.3cm}
\end{figure*}

\begin{figure}[ht]
\centering
\includegraphics[keepaspectratio,width=0.32\linewidth]{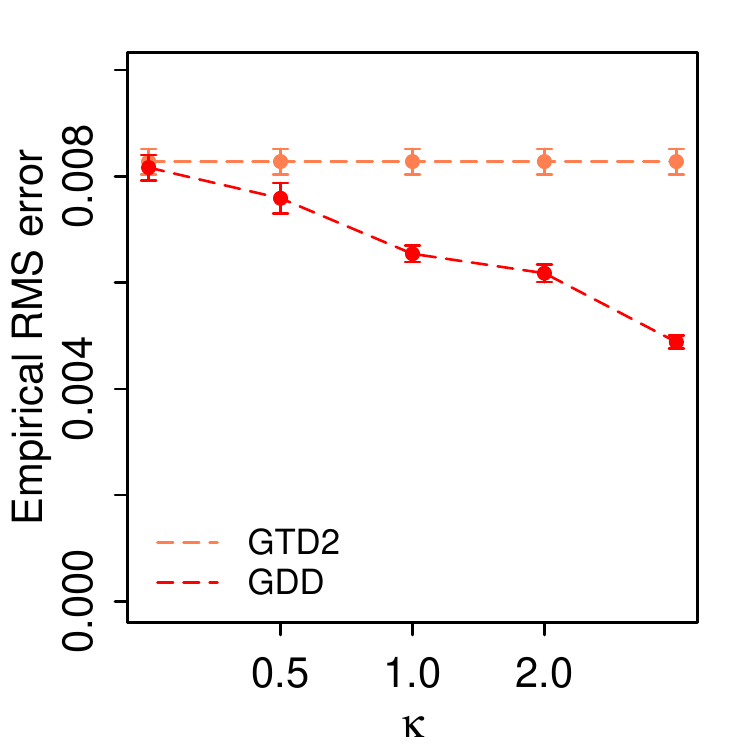}
\includegraphics[keepaspectratio,width=0.32\linewidth]{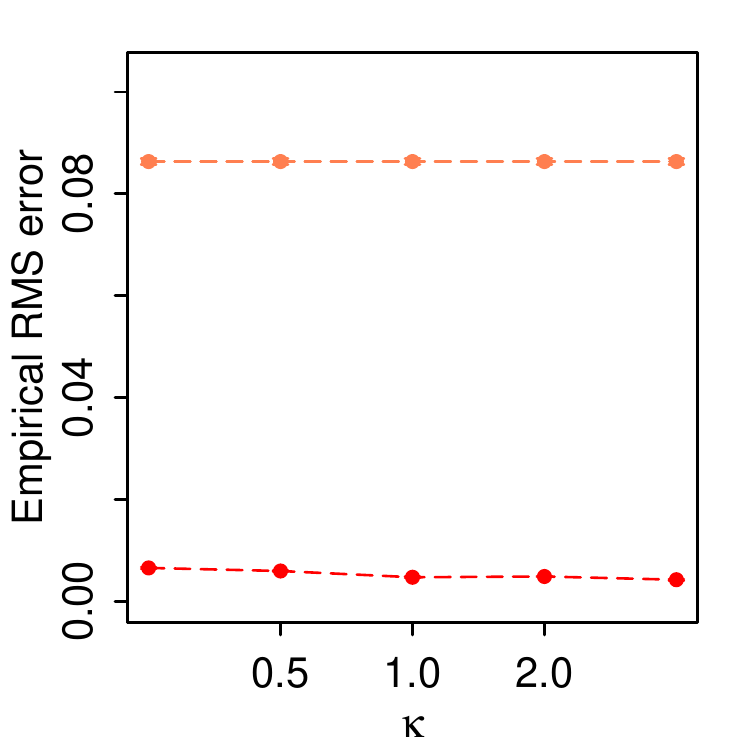}
\includegraphics[keepaspectratio,width=0.32\linewidth]{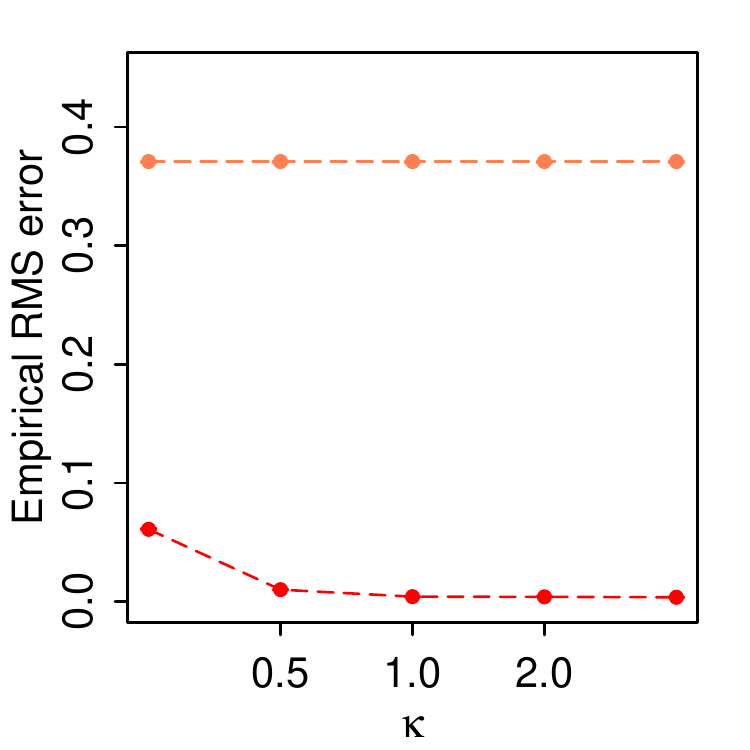}
\caption{Performance of Gradient-DD in the random walk task in the tabular representation with $\kappa\in\{0.25, 0.5, 1, 2, 4\}$. 
From left to right: state space size 10 (left), 20 (middle), or 40 (right). 
In each figure, $\alpha$ is tuned for each algorithm by minimizing the average error of the last 100 episodes.  
Results are averaged over 50 runs, with error bars denoting standard error of the mean.  
}
\label{fig:walk-various-kappa}
\vspace{-0.3cm}
\end{figure}

\begin{figure}[ht]
\centering
\includegraphics[keepaspectratio,width=0.32\linewidth]{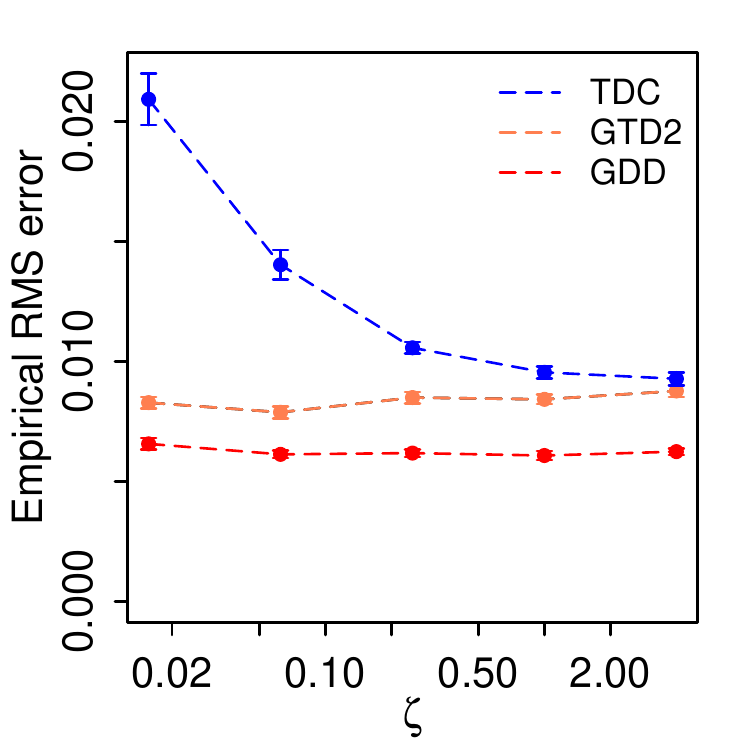}
\includegraphics[keepaspectratio,width=0.32\linewidth]{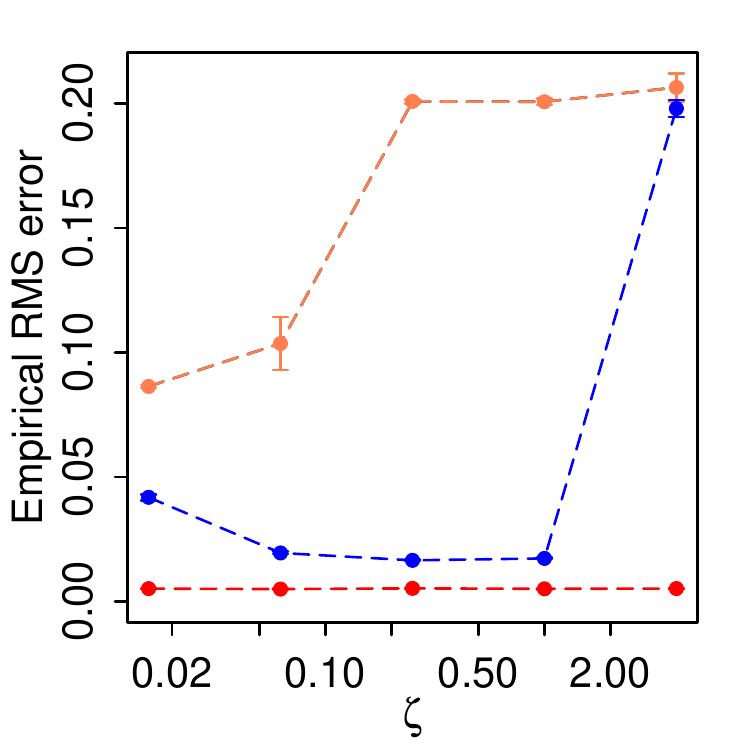}
\includegraphics[keepaspectratio,width=0.32\linewidth]{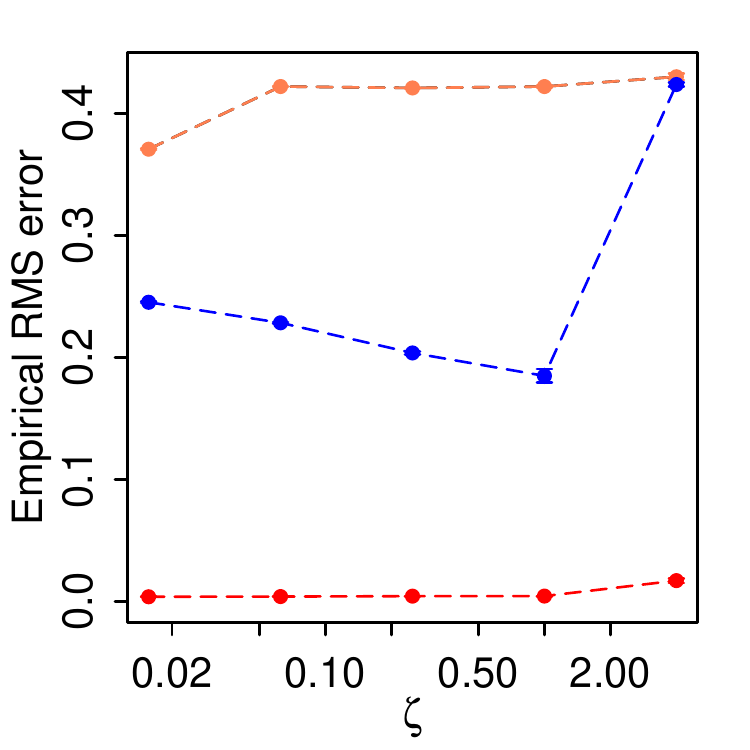}
\caption{The random walk task in the tabular representation. Performance for various $\beta_n=\zeta\alpha_n$, with $\zeta\in \{4^{-3},4^{-2},4^{-1}, 1, 4\}$. From left to right in each row: the size of the state space is $m=10$, 
$m=20$,  and $m=40$. In each case $\alpha$ is tuned by by minimizing the average error of the last 100 episodes according to the their  performance of corresponding algorithms.
Results are averaged over 50 runs, with error bars denoting standard error of the mean.
}
\label{fig:walk-beta}
\vspace{-0.2cm}
\end{figure}

\begin{figure}[ht]
\centering
\includegraphics[keepaspectratio,width=0.35\linewidth]{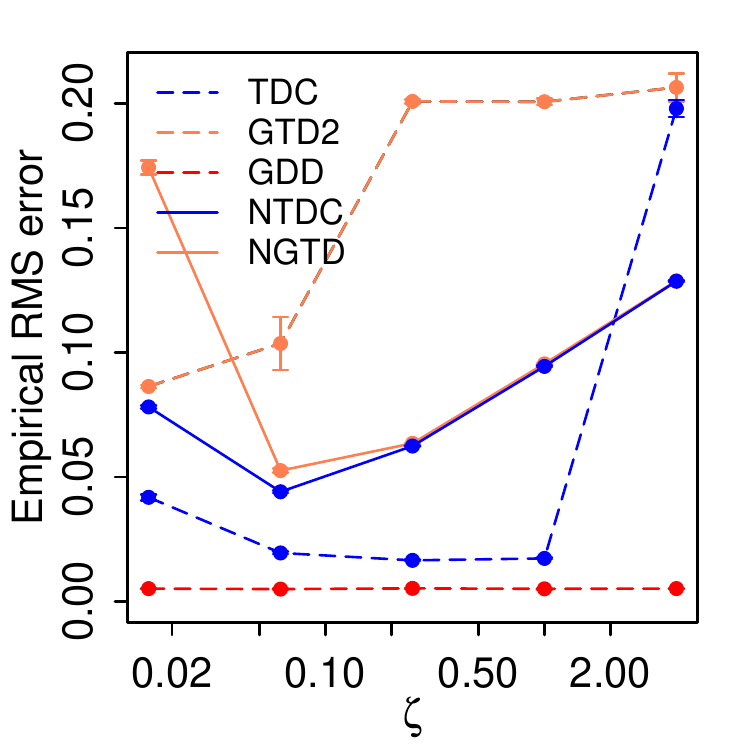}
\caption{Performance of natural TDC and natural GTD2 in the random walk task with the tabular representation and $m=20$. Performance for various $\beta_n=\zeta\alpha_n$, with $\zeta\in \{4^{-3},4^{-2},4^{-1}, 1, 4\}$. In each case $\alpha$ is tuned by by minimizing the average error of the last 100 episodes according to the their  performance of corresponding algorithms. 
``NGTD" and ``NTDC" denote the natural gradient-based algorithm of GTD2 and TDC, respectively. 
Results are averaged over 50 runs, with error bars denoting standard error of the mean.
}
\label{fig:walk-natural}
\vspace{-0.2cm}
\end{figure}


\begin{figure}[ht]
\centering
\includegraphics[keepaspectratio,width=0.32\linewidth]{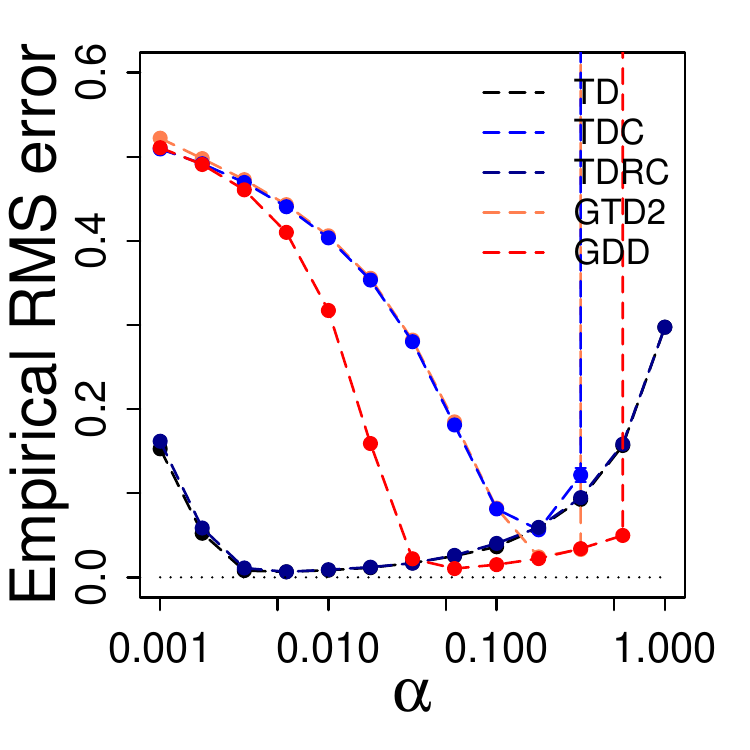}
\includegraphics[keepaspectratio,width=0.32\linewidth]{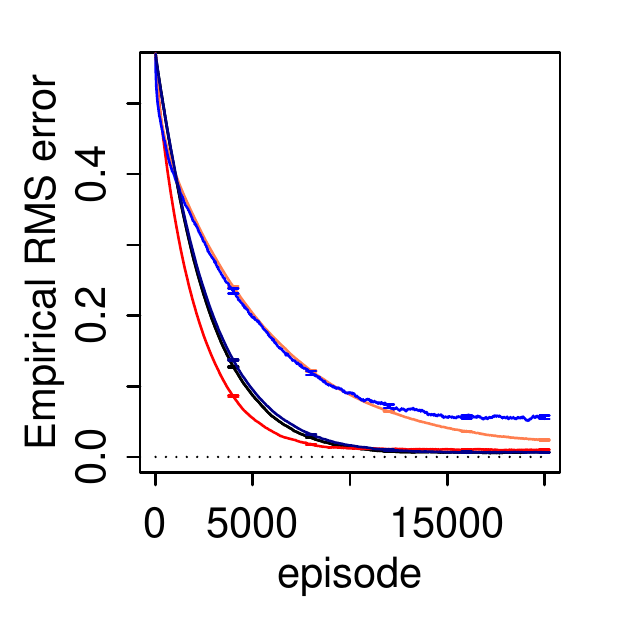}
\vspace{-0.1cm}
\caption{The random walk task with the tabular representation and constant step size. The state size is 20. 
The curves are averaged over 50 runs, with error bars denoting the standard error of the mean, though most are vanishingly small. 
}
\label{fig:walk-constant}
\vspace{-0.3cm}
\end{figure}

\begin{figure}[ht]
\centering
\includegraphics[keepaspectratio,width=0.32\linewidth]{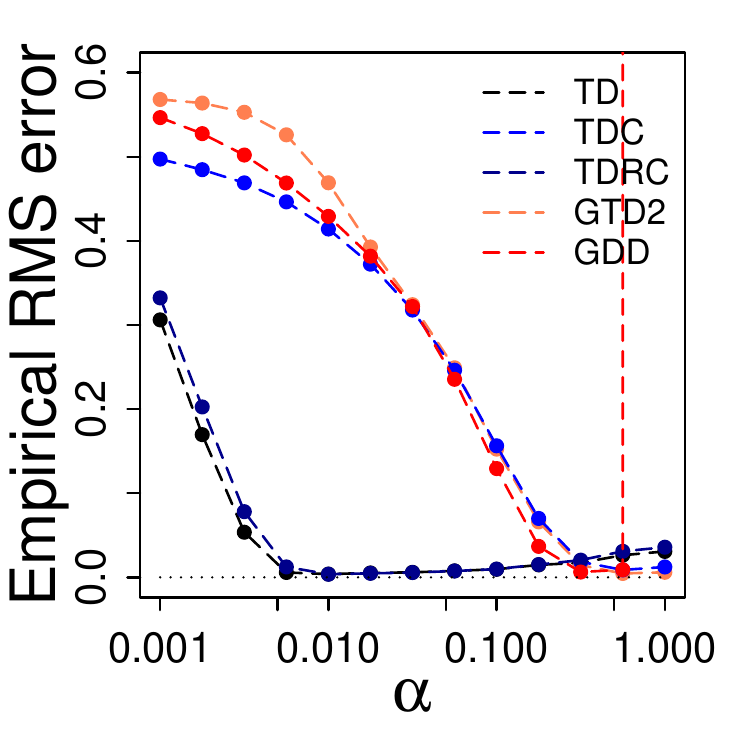}
\includegraphics[keepaspectratio,width=0.32\linewidth]{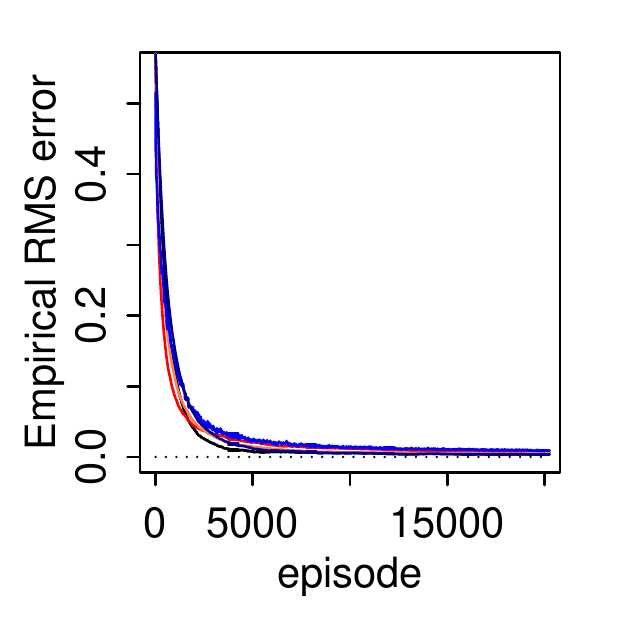}
\caption{The random walk task with linear value-function approximation, where $\alpha_n=\alpha(10^3+1)/(10^3+n)$. 
The state size is $m=20$. Each state is represented by a $p$-dimensional feature vector with $p=5$, corresponding to $m$-state with $m=20$. 
The $p$-dimensional representation for every fourth state from the start is $[1,0,\cdots,0]$ for state $s_1$, $[0,1,0,\cdots,0]$ for $s_5$, $\cdots$, and $[0,0,\cdots,0,1]$ for state $s_{4p+1}$. The representations for the intermediate states are obtained by linearly interpolating between these. The sequential states, $S_1,\cdots, S_m$ are obtained by using the first $m$ states above. 
Left: Performance as $\alpha$; Right: performance over episodes. 
The curves are averaged over 50 runs. 
}
\vspace{-0.1cm}
\label{fig:walk-linear}
\vspace{-0.3cm}
\end{figure}

\begin{figure}[ht]
\centering
\includegraphics[keepaspectratio,width=0.32\linewidth]{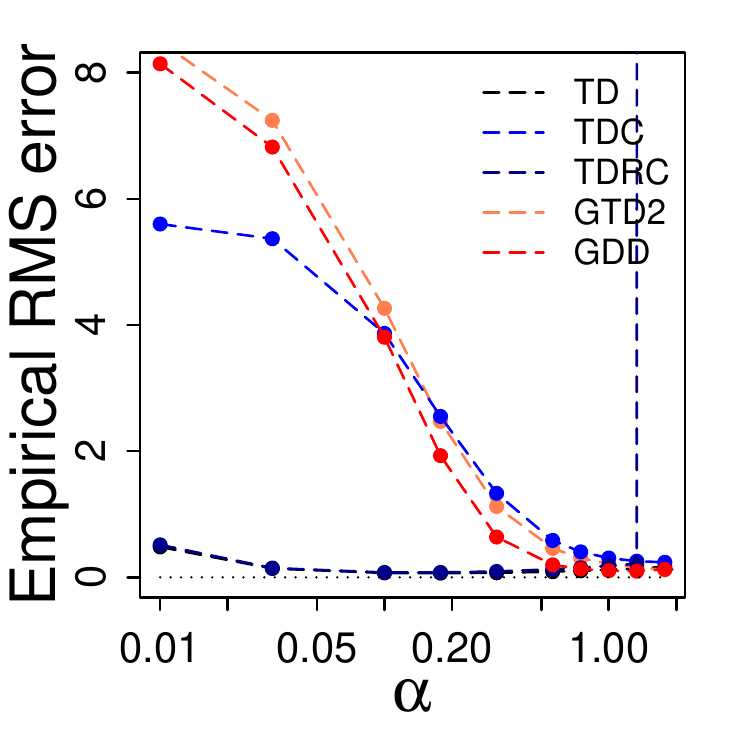}
\includegraphics[keepaspectratio,width=0.32\linewidth]{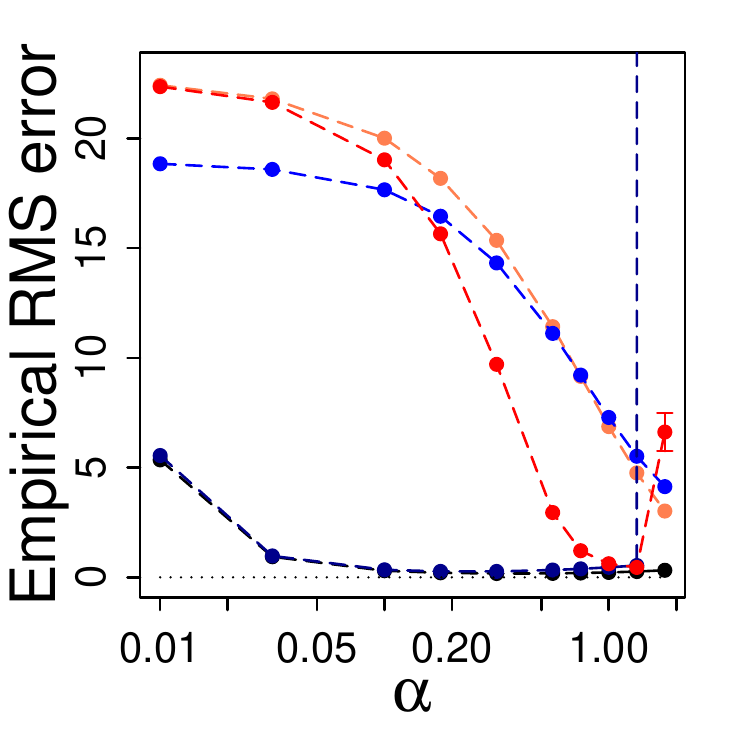}
\includegraphics[keepaspectratio,width=0.32\linewidth]{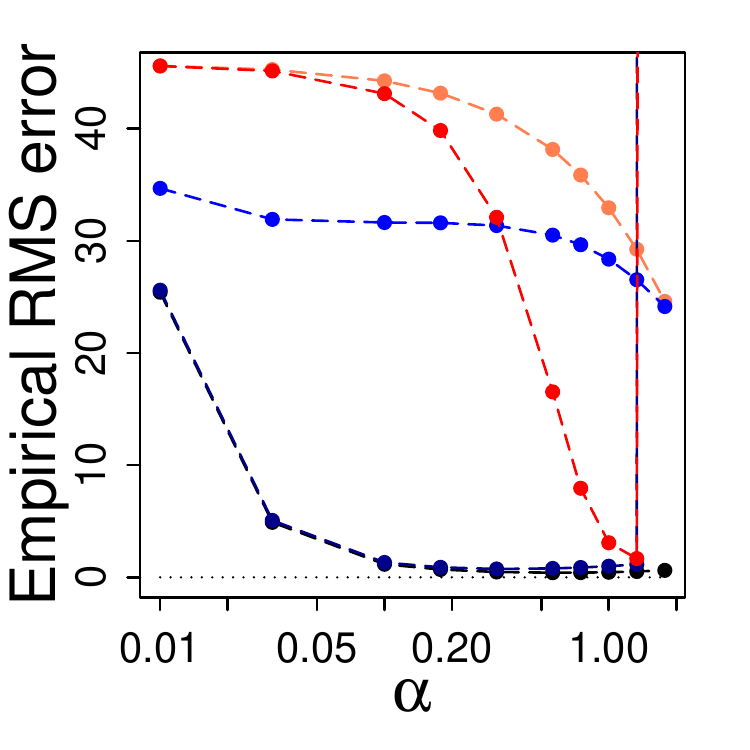}
\includegraphics[keepaspectratio,width=0.32\linewidth]{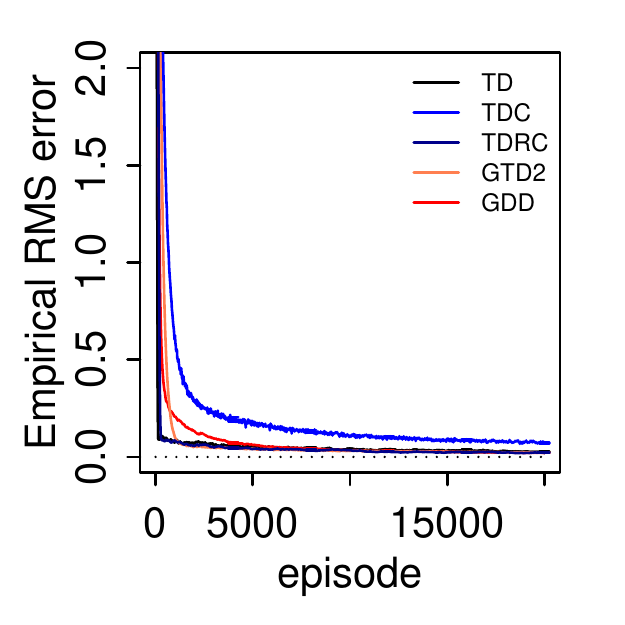}
\includegraphics[keepaspectratio,width=0.32\linewidth]{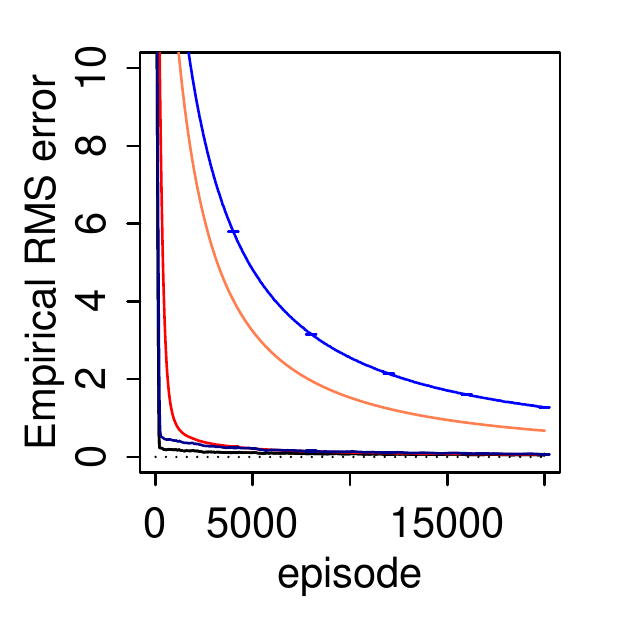}
\includegraphics[keepaspectratio,width=0.32\linewidth]{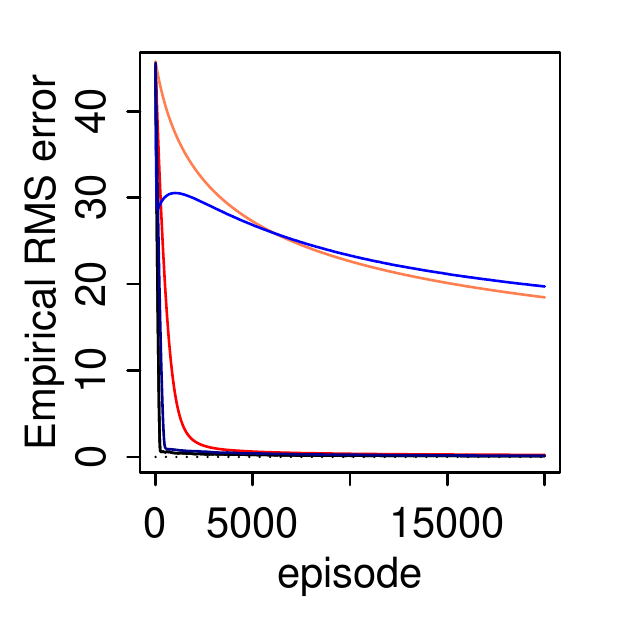}
\caption{The Boyan Chain task with linear approximation. 
The setting is similar to Fig.~\ref{fig:boyan}, but 
we evaluate the performance the average error of all episodes, and 
$\alpha$ is tuned by minimizing the average error of all episodes.
Upper: Performance as a function of $\alpha$; Lower: performance over episodes. 
}
\label{fig:boyan-area}
\end{figure}

\begin{figure*}[ht]
\centering
\includegraphics[keepaspectratio,width=0.32\linewidth]{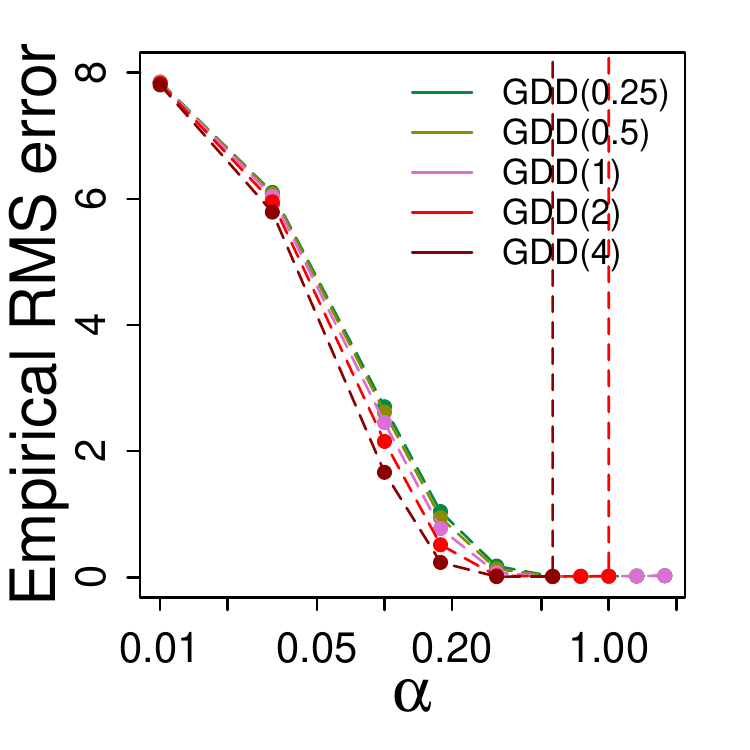}
\includegraphics[keepaspectratio,width=0.32\linewidth]{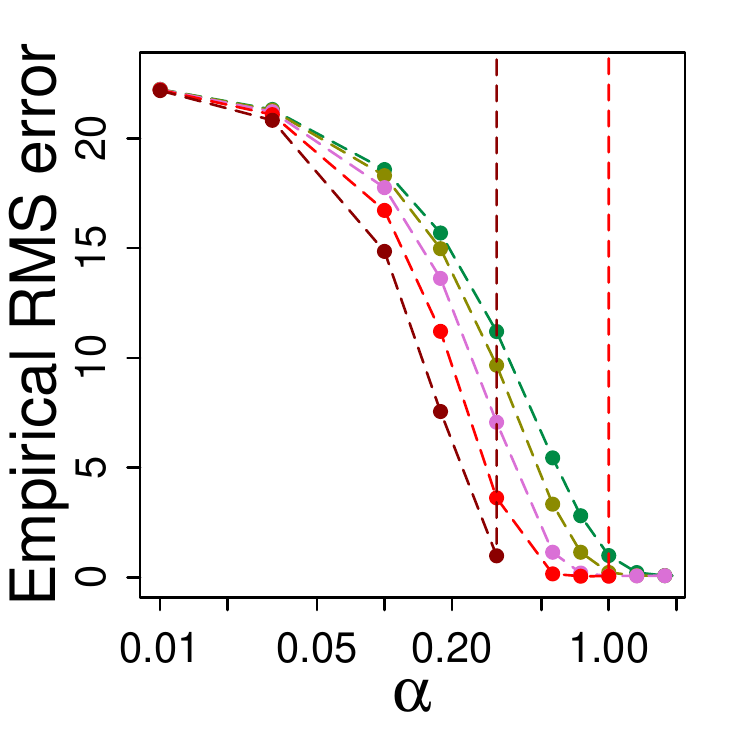}
\includegraphics[keepaspectratio,width=0.32\linewidth]{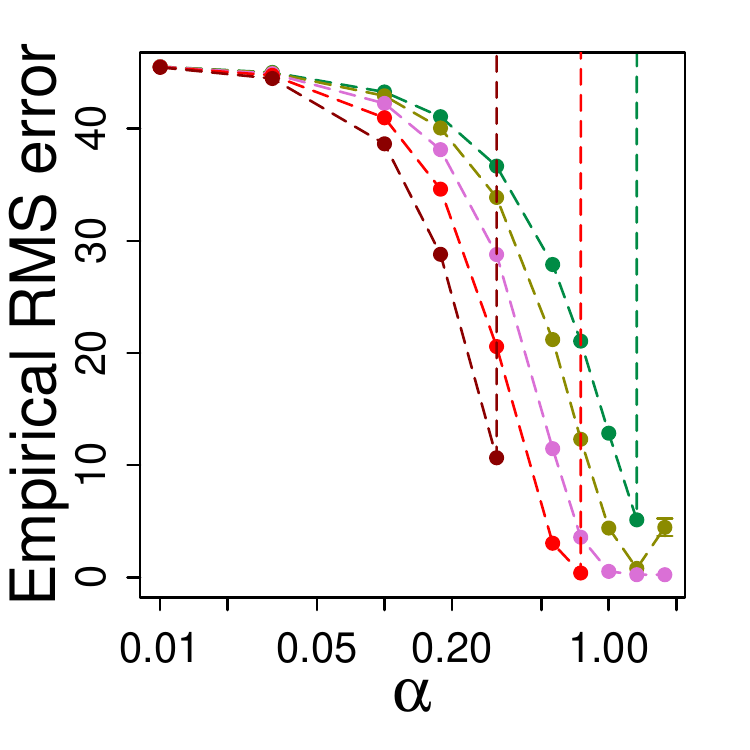}
\caption{Performance of Gradient-DD in the Boyan Chain task with $\kappa\in\{2^{-2},2^{-1},1,2,2^2\}$ and tapered
$\alpha_n$. ``GDD($\kappa$)" denotes the Gradient-DD with regularization parameter $\kappa$. }
\label{fig:boyan-kappa}
\end{figure*}

\end{document}